\documentclass[11pt, letterpaper]{article}

\usepackage{geometry}
\usepackage[utf8]{inputenc} 
\usepackage[T1]{fontenc}    

\usepackage{amsmath, amssymb, amsthm}
\newtheorem{theorem}{Theorem}

\usepackage{graphicx}
\usepackage{booktabs}       
\usepackage{multirow}       
\usepackage{xcolor}         

\usepackage{algorithm}
\usepackage{algorithmic}

\usepackage[numbers]{natbib} 
\usepackage{url}
\usepackage[colorlinks=true, citecolor=blue, linkcolor=blue, urlcolor=blue]{hyperref} 

\usepackage{pifont}
\newcommand{\cmark}{\ding{51}}   
\newcommand{\xmark}{\ding{55}}   
\newcommand{\pmark}{$\triangle$} 

\providecommand{\keywords}[1]{\vspace{0.5em}\noindent\textbf{\textit{Keywords---}} #1}

 \usepackage{authblk} 

\title{Operationalizing Fairness: Post-Hoc Threshold Optimization Under Hard Resource Limits}

\author[1]{Moirangthem Tiken Singh\thanks{Corresponding author. Email: \href{mailto:tiken.m@dibru.ac.in}{tiken.m@dibru.ac.in}}}
\author[1]{Amit Kalita\thanks{Email: \href{mailto:contactamitkalita@gmail.com}{contactamitkalita@gmail.com}}}
\author[2]{Sapam Jitu Singh\thanks{Email: \href{mailto:jitu.mit.cse@manipuruniv.ac.in}{jitu.mit.cse@manipuruniv.ac.in}}}

\affil[1]{Department of Computer Science and Engineering, DUIET, Dibrugarh University, Assam, India}
\affil[2]{Department of Computer Science and Engineering, MIT, Manipur University, 795003, India}

\date{}

\begin{document}

\maketitle

\begin{abstract}
The deployment of machine learning in high-stakes domains requires a balance between predictive safety and algorithmic fairness. However, existing fairness interventions often assume unconstrained resources and employ group-specific decision thresholds that violate anti-discrimination regulations. We introduce a post-hoc, model-agnostic threshold optimization framework that jointly balances safety, efficiency, and equity under strict and hard capacity constraints. To ensure legal compliance, the framework enforces a single, global decision threshold. We formulated a parameterized ethical loss function coupled with a bounded decision rule that mathematically prevents intervention volumes from exceeding the available resources. Analytically, we prove the key properties of the deployed threshold, including local monotonicity with respect to ethical weighting and the formal identification of critical capacity regimes. We conducted extensive experimental evaluations on diverse high-stakes datasets. The principal results demonstrate that capacity constraints dominate ethical priorities; the strict resource limit determines the final deployed threshold in over 80\% of the tested configurations. Furthermore, under a restrictive 25\% capacity limit, the proposed framework successfully maintains high risk identification (recall ranging from 0.409 to 0.702), whereas standard unconstrained fairness heuristics collapse to a near-zero utility. We conclude that theoretical fairness objectives must be explicitly subordinated to operational capacity limits to remain in deployment. By decoupling predictive scoring from policy evaluation and strictly bounding intervention rates, this framework provides a practical and legally compliant mechanism for stakeholders to navigate unavoidable ethical trade-offs in resource-constrained environments.

\keywords{Algorithmic fairness, Capacity constraints, Threshold optimization, Resource allocation, Multi-objective optimization, Machine learning}
\end{abstract}

\section{Introduction}
The expanding deployment of machine learning systems in high-stakes decision-making domains, such as clinical risk stratification, criminal justice, credit underwriting, and social welfare allocation, has amplified concerns regarding algorithmic accountability and distributive equity \cite{10.1108/978-1-83608-156-220251007,10.1108/AAAJ-02-2022-5666, Podoletz2023EmotionalAI}. Because predictive outputs in these settings directly mediate access to beneficial opportunities or exposure to punitive or otherwise adverse interventions, systematic algorithmic bias can disproportionately burden historically marginalized populations~\cite{Coots2025RacialBias}. Consequently, fairness has evolved from a predominantly theoretical notion to an explicit regulatory and governance requirement, as evidenced by policy instruments such as the European Union’s Artificial Intelligence Act and UNESCO’s Recommendation on the Ethics of Artificial Intelligence~\cite{teodorescu2025analysisneweuai}. Despite these developments, a substantial gap persists between formal fairness definitions and their implementation in operational systems~\cite{10.1145/3679240.3734615}. Much of the academic literature implicitly treats fairness as an unconstrained optimization criterion, whereas deployed decision-support systems are subject to stringent and finite resource constraints~\cite{huang2025adapfairensuringadaptivefairness}.

In operational contexts, risk-scoring models yield probabilistic estimates that must be discretized into binary actions through threshold selection. Standard calibration and thresholding strategies typically optimize a statistical performance criterion, such as the overall predictive accuracy or parity-based group fairness metrics~\cite{10.1007/978-3-031-33377-4_18}. However, this methodology tacitly assumes that all instances exceeding the decision threshold can be targeted with an intervention. However, real-world deployment environments generally violate this assumption. Hospitals have limited intensive care unit capacity, recruitment pipelines support only a fixed number of candidate interviews, and social programs are bound by strict budgetary ceilings. When the number of predicted high-risk cases surpasses the available capacity, deployment becomes operationally infeasible, giving rise to ad hoc rationing mechanisms, alert fatigue among human decision-makers, or, ultimately, the disuse or abandonment of the algorithmic system.

Under such constraints, threshold selection is no longer merely a trade-off between accuracy and fairness, a tension formalized by existing impossibility results \cite{kleinberg_inherent_2017,chouldechova2017fairprediction}. Instead, it becomes a complex resource allocation problem governed by three competing operational priorities: (i) safety, which minimizes false negatives so that genuinely high-risk cases are not missed; (ii) efficiency, which minimizes false positives to avoid wasting scarce resources; and (iii) equity, which ensures that the allocation of the limited intervention budget does not disproportionately alter true positive rates across demographic subgroups.

Existing fair machine learning approaches struggle to jointly satisfy these objectives under hard capacity constraints. In-processing methods embed fairness directly into the learning objective and provide formal guarantees \cite{pmlr-v80-agarwal18a,NEURIPS2018_83cdcec0}. However, they tightly couple the prediction and deployment policies. When resources or priorities change, for example, a sudden reduction in available hospital beds, adapting intervention levels requires complete model retraining \cite{fontana2026optimizingfairness,yaghini2025privaterateconstrainedoptimizationapplications}. Moreover, complex in-processing models often limit the interpretability of the practitioners responsible for operational oversight. Post-processing approaches adjust decision thresholds after training, preserving flexibility and model independence \cite{10.5555/3157382.3157469,NIPS2017_b8b9c74a,xian2023fairoptimalclassification}. However, these methods typically optimize a specific fairness metric without enforcing a bound on the intervention volume. In populations with differing base rates, statistical fairness may be achieved only by recommending intervention levels that far exceed feasible budgets, making them highly impractical in triage settings. Furthermore, while several state-of-the-art post-processing methods achieve parity by applying distinct group-specific thresholds, such practices constitute illegal disparate treatment in heavily regulated domains, such as credit underwriting (under the Equal Credit Opportunity Act) and employment. Consequently, there remains a critical need for frameworks that achieve equitable allocations using a legally compliant and single global decision boundary.

Recent work on resource-aware fairness acknowledges capacity limitations but primarily quantifies fairness costs \cite{goethals2025resourceconstrainedfairness} or relies on complex Pareto-front exploration \cite{garcia2025fairpredictionsets}. These approaches do not provide a simple post-hoc mechanism that simultaneously guarantees a strict cap on interventions while allowing for the transparent tuning of safety, efficiency, and equity. Furthermore, existing thresholding methods rarely offer analytically tractable behaviors, such as monotonicity or saturation points, leaving practitioners to rely on ad hoc tuning.

To address this gap, we propose a post-hoc threshold optimization framework that explicitly balances safety, efficiency, and equity under hard capacity constraints. We hypothesize that in resource-constrained environments, the proposed framework's capacity-driven decision rule will predominantly determine the deployed threshold, with ethical priorities modulating outcomes primarily within the feasible region defined by these constraints. We introduce a parameterized ethical loss combined with a single, global bounded threshold rule that never permits the intervention volume to exceed the available resources. When the unconstrained ethically optimal threshold violates capacity, the decision rule defaults to the strictest, feasible threshold. This design formally decouples the prediction from the policy evaluation, enabling systematic scenario analysis and operational adjustment without retraining the underlying predictive model.

The main contributions of this study are as follows:
\begin{itemize}
    \item \textbf{Constrained Post-hoc Framework:} A tunable, model-agnostic optimization method that balances safety, efficiency, and equity using a legally compliant single-threshold rule while strictly enforcing capacity limits at inference time.

    \item \textbf{Theoretical Guarantees:} Analytical properties governing the deployed threshold, including monotonicity in ethical weights and the formal identification of critical capacity regimes.

    \item \textbf{Empirical Evidence of Constraint Dominance:} Extensive factorial experiments across diverse, high-stakes benchmark datasets demonstrating that capacity constraints dominate ethical weighting across the vast majority of operational configurations.

    \item \textbf{Ethical Saturation Analysis:} Identification of operational regimes in which increasing ethical emphasis yields no additional outcome change due to the bounding effects of capacity limits.
\end{itemize}

By treating capacity as an explicit, binding constraint while preserving tunable ethical priorities, the proposed framework operationalizes algorithmic fairness in real-world deployment environments.

The remainder of this paper is organized as follows. Section 2 reviews the related literature and contextualizes the legal limitations of existing fairness intervention. Section 3 formalizes the proposed methodology and derives the theoretical properties of constrained decision rules. Section 4 details the experimental design and presents a comprehensive empirical analysis of three high-stakes datasets. Finally, Section 5 discusses the broader implications of these findings, acknowledges the limitations of the proposed framework, and outlines directions for future research.

\section{Literature Review}
\label{sec:lits}

Fair machine learning investigates principled methodologies for mitigating algorithmic bias in automated decision-making systems deployed in domains such as hiring, criminal justice, healthcare, and lending. Early foundational work introduced formal notions of fairness, including statistical parity, equality of opportunity, and individual fairness \cite{10.1145/2090236.2090255,10.5555/3157382.3157469}. Subsequent research identified fundamental incompatibilities between various fairness criteria and predictive performance \cite{10.5555/3157382.3157469,chouldechova2017fairprediction}, thereby motivating the design of mitigation strategies that intervene at multiple stages of the machine learning pipeline.

\paragraph{Post-processing methods.}
Post-processing approaches operate on the outputs of trained models and adjust decisions without retraining. A canonical method achieves equalized odds by formulating and solving a linear program over threshold-based decision rules \cite{10.5555/3157382.3157469}. This framework has been extended to incorporate calibration-aware fairness \cite{NIPS2017_b8b9c74a}, techniques for reducing disparate mistreatment \cite{10.1145/3038912.3052660}, and optimal post-processing strategies for fair classification \cite{xian2023fairoptimalclassification}. Related lines of work have adapted these concepts to ranking and information retrieval systems \cite{10.1145/3533380}. Although post-processing methods are generally model-agnostic and computationally tractable, they typically target a single fairness metric and do not explicitly integrate resource constraints or support systematic reasoning regarding multi-objective trade-offs.

\paragraph{In-processing methods.}
In-processing approaches directly incorporate fairness into the training procedure. Representative methods reformulate fairness constraints as cost-sensitive classification problems \cite{pmlr-v80-agarwal18a} or treat empirical risk minimization under explicit fairness constraints \cite{NEURIPS2018_83cdcec0}. Group fairness-aware algorithms and joint optimization frameworks explicitly balance predictive performance and group-level fairness criteria \cite{narasimhan2020pairwisefairness,10.1145/3287560.3287586}. Counterfactual and path-specific notions of fairness have been rigorously defined within causal inference frameworks \cite{Chiappa_2019,NIPS2017_a486cd07}. More recent constrained deep learning methods implement Lagrangian or rate-constrained optimization schemes to enforce fairness during training \cite{fontana2026optimizingfairness,yaghini2025privaterateconstrainedoptimizationapplications}. Although these in-processing techniques often provide formal guarantees, they typically necessitate retraining whenever institutional priorities, constraints, or regulatory requirements change, thereby limiting adaptability in dynamic deployment environments.

\paragraph{Resource-aware fairness.}
An emerging line of research explicitly incorporates finite resources and capacity limitations into fairness-aware models. Resource-constrained formulations treat favorable decisions as scarce resources and quantify the cost of enforcing fairness across different budget regimes \cite{goethals2025resourceconstrainedfairness}. Differentially private, rate-constrained optimization frameworks and weighted trade-off analyses further investigate fairness under capacity constraints \cite{yaghini2025privaterateconstrainedoptimizationapplications,jrfm18120724}. Augmented Lagrangian methods have been employed to optimize fairness, subject to performance budgets \cite{fontana2026optimizingfairness}. Collectively, these contributions demonstrate that the cost of fairness is inherently dependent on the available capacity; however, they largely focus on cost quantification or in-processing strategies and do not provide a tunable post-hoc mechanism for multi-objective control.

\paragraph{Multi-objective methodologies.}
Multi-objective methodologies explicitly address competing goals. Research on Pareto-optimal trade-offs analyzes the balance between fairness and accuracy and proposes hyperparameter optimization frameworks for constructing fair prediction sets \cite{xian2023fairoptimalclassification,10.1145/3287560.3287586,garcia2025fairpredictionsets}. Threshold optimization has been investigated for both ranking and classification tasks \cite{10.1145/3533380,10.5555/3648699.3649011}. Recent surveys underscore the need for interpretable methods capable of concurrently managing multiple objectives \cite{10.1145/3457607,barocas-hardt-narayanan}. However, threshold optimization under explicit resource constraints remains comparatively unexplored.

\paragraph{Legal and operational constraints.}
While recent post-hoc frameworks such as GSTAR, LinearPost, and capacity-constrained adaptations of Hardt et al. \cite{10.5555/3157382.3157469} successfully trade off fairness and predictive performance by employing group-specific decision thresholds, our framework is intentionally constrained to a single global threshold. In many high-stakes, tightly regulated domains, the use of distinct thresholds conditioned on protected characteristics constitutes unlawful disparate treatment. For example, Section 106 of the U.S. Civil Rights Act of 1991 explicitly amended Title VII to prohibit the use of differential cutoff scores based on race or sex \cite{civilrights1991}. Analogous restrictions apply to credit underwriting under the Equal Credit Opportunity Act (ECOA) \cite{ecoa1974}. As extensively discussed in the algorithmic fairness literature \cite{NEURIPS2018_8e038477, xiang2019legalcompatibilityfairnessdefinitions, hellman2020measuring}, attempts to mitigate disparate impact through group-specific thresholding are in direct tension with and often violate these disparate treatment doctrines. Therefore, although group-conditioned allocations can attain mathematically superior Pareto frontiers with respect to fairness-accuracy trade-offs, they are often not deployable in legally regulated settings. Our proposed framework is explicitly constructed to optimize the allocation of scarce resources subject to these stringent legal constraints, thereby providing a legally compliant single-threshold benchmark.

\paragraph{Remaining gaps.}
However, substantial gaps remain in the existing scholarly literature. Most post-processing approaches do not impose explicit upper bounds on intervention rates, thereby permitting the resulting policies to violate operational or budgetary constraints. Resource-aware frameworks are typically optimized with respect to a single fairness criterion or a single notion of cost and, as a result, do not provide a tunable post-hoc mechanism for jointly balancing safety, efficiency, and equity. In-processing methods generally require model retraining when institutional priorities evolve and frequently produce decision rules that are difficult for practitioners to interpret. Moreover, existing threshold-optimization techniques rarely furnish analytically tractable guarantees, such as monotonicity with respect to ethical weighting parametersor formal characterizations of saturation phenomena, which leaves stakeholders without principled tools for determining when additional emphasis on safety or equity yields diminishing marginal returns.

\paragraph{Our contribution.}
To address these interconnected limitations, we introduce a post-hoc, model-agnostic framework that explicitly decouples prediction from policy evaluation. The framework specifies a parameterized ethical loss function that jointly encodes preferences over safety, efficiency, and equity and couples this loss with a bounded thresholding mechanism that enforces adherence to capacity constraints. This design supports scenario analysis without retraining, yields theoretical properties that ensure predictable and interpretable behavior, enables the recovery of recall under strict budgetary limits, and provides a formal characterization of ethical-saturation points. By treating capacity as an explicit constraint while preserving tunable trade-offs, the framework renders fairness considerations operational for deployment in resource-constrained settings.

\section{Methodology}

The core challenge in deploying risk-based systems is the mismatch between continuous risk scores and discrete operational decision-making. While predictive models may rank individuals effectively, converting scores into binary decisions requires a decision threshold $\tau$. This choice is rarely neutral with regard to safety, efficiency, or equity. We frame threshold selection as a multi-objective optimization problem balancing three goals: maximizing safety (minimizing missed high-risk cases), preserving efficiency (controlling intervention volume), and promoting equity (reducing subgroup disparities).

\begin{figure}[ht]
    \centering
    \includegraphics[width=0.8\linewidth]{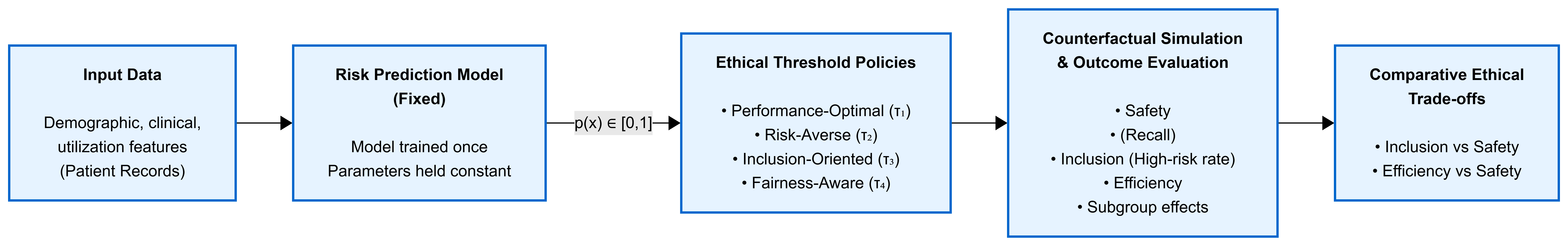}
    \caption{Overview of the risk assessment system architecture.}
    \label{fig:arch}
\end{figure}

This formulation addresses three fundamental issues: (1) the limitations of fixed thresholds, (2) the trade-offs among safety, efficiency, and equity, and (3) the operational necessity to respect resource constraints. To address these trade-offs without requiring computationally expensive model retraining, we propose a framework, as illustrated in Figure~\ref{fig:arch}, that explicitly separates the prediction from the policy evaluation.

As illustrated, a fixed predictive model produces risk scores $p(x) \in [0,1]$, which are subsequently processed by a set of candidate threshold-based decision policies that instantiate distinct ethical priorities. Through simulation, we assess system-level outcomes, such as safety, efficiency, and equity, under a range of operational and normative constraints. This framework isolates the impact of threshold selection from that of the underlying model architecture, thereby enabling a systematic scenario analysis and rigorous pre-deployment evaluation.

Let $f: \mathcal{X} \to [0,1]$ be a fixed and well-calibrated risk scoring function that assigns to each instance $x$ a predicted risk probability $p(x) = f(x)$. A threshold-based decision rule is defined as
\begin{equation}
    d_\tau(x) = \mathbb{I}\{p(x) \geq \tau\} \in \{0,1\},
\end{equation}
where $\mathbb{I}\{\cdot\}$ is the indicator function and $\tau \in [0,1]$. We restrict $\tau$ to a single global threshold across all demographic groups to ensure strict compliance with anti-discrimination legal doctrines (e.g., prohibiting disparate treatment via group-specific cutoffs), as discussed in Section \ref{sec:lits}.

The goal is to select the threshold $\tau^*$ that minimizes the composite ethical loss while respecting a hard capacity constraint:
\begin{equation}
    \tau^* = \arg\min_{\tau \in [0,1]} \mathcal{L}(\tau) \quad \text{subject to} \quad \mathbb{P}(d_\tau(x) = 1) \leq C,
    \label{eq:constrained_opt}
\end{equation}
where $C \in (0,1]$ is the maximum allowable intervention fraction, and the ethical loss is given by:
\begin{equation}
    \mathcal{L}(\tau;\alpha,\beta,\gamma) = \alpha \cdot \mathrm{FNR}(\tau) + \beta \cdot \mathrm{FPR}(\tau) + \gamma \cdot \Delta(\tau).
    \label{eq:ethical_loss}
\end{equation}
Here, $\mathrm{FNR}(\tau)$ and $\mathrm{FPR}(\tau)$ denote the false negative and false positive rates induced by threshold $\tau$, respectively; $\Delta(\tau)$ measures subgroup disparity (operationalized here as the maximum absolute difference in true positive rates across protected groups, consistent with equalized odds); and $\alpha, \beta, \gamma \geq 0$ are contextual weights that encode relative priorities among safety, efficiency, and equity.

Let $\tau_{\text{free}}$ denote the unconstrained ethical optimum.
\begin{equation}
    \tau_{\text{free}} = \arg\min_{\tau \in [0,1]} \mathcal{L}(\tau;\alpha,\beta,\gamma).
    \label{eq:unconstrained_opt}
\end{equation}

Let $\tau(C)$ be the threshold that exactly satisfies the capacity constraint:
\begin{equation}
    \tau(C) = F_p^{-1}(1 - C) := \inf \Bigl\{ t \in [0,1] : \mathbb{P}(p(x) \geq t) \leq C \Bigr\},
    \label{eq:capacity_threshold}
\end{equation}
where $F_p^{-1}(\cdot)$ is the quantile function of the cumulative distribution of the predicted risk scores. The final deployed threshold is defined as:
\begin{equation}
    \tau^* = \max \bigl( \tau(C),\ \tau_{\text{free}} \bigr).
    \label{eq:final_threshold}
\end{equation}

This rule admits the following operational interpretation: when capacity is abundant ($\tau(C) \leq \tau_{\text{free}}$), the ethical priorities encoded in $\mathcal{L}(\cdot)$ fully determine the operating point; when capacity is binding ($\tau(C) > \tau_{\text{free}}$), the threshold is tightened to the strictest feasible value under the resource constraints.

To obtain the optimized ethical threshold, we minimize the loss function over a discretized grid of candidate thresholds. This process is formalized in Algorithm~\ref{alg:ethical_threshold}.

\begin{algorithm}[ht]
\caption{Ethical Threshold Optimization Procedure}
\label{alg:ethical_threshold}
\begin{algorithmic}[1]
\REQUIRE Predicted risk scores $\{p(x_i)\}_{i=1}^N$, true labels $\{y_i\}$, ethical weights $(\alpha,\beta,\gamma)$
\ENSURE Optimized ethical threshold $\tau^*$
\STATE Define a discrete grid of threshold values $\mathcal{T} \subset [0,1]$
\FOR{each $\tau \in \mathcal{T}$}
    \STATE Compute $\mathrm{FNR}(\tau)$ and $\mathrm{FPR}(\tau)$ on the validation set
    \STATE Compute subgroup disparity $\Delta(\tau)$
    \STATE Evaluate ethical loss $\mathcal{L}(\tau) = \alpha\,\mathrm{FNR}(\tau) + \beta\,\mathrm{FPR}(\tau) + \gamma\,\Delta(\tau)$
\ENDFOR
\STATE Select $\tau_{\text{free}} = \arg\min_{\tau \in \mathcal{T}} \mathcal{L}(\tau)$
\STATE Compute capacity threshold $\tau(C) = Q_{1-C}(p(x))$
\RETURN $\tau^* = \max(\tau(C), \tau_{\text{free}})$
\end{algorithmic}
\end{algorithm}

Algorithm~\ref{alg:ethical_threshold} details the computational procedure for selecting the decision thresholds. The process begins by defining a discrete grid of candidate threshold values $\mathcal{T}$ spanning the range $[0,1]$. For each candidate $\tau$, the algorithm calculates the FNR, FPR, and subgroup disparity $\Delta(\tau)$ in the validation set. These metrics are aggregated into a single scalar loss value $\mathcal{L}(\tau)$ using the specified weights $\alpha, \beta, \gamma$. The unconstrained optimal threshold $\tau_{\text{free}}$ is identified as the grid point that minimizes the loss.

Simultaneously, the procedure computes the capacity-constrained threshold $\tau(C)$ as the $(1-C)$-quantile of the predicted risk score. This value represents the minimum strictness required to maintain the intervention rate within the limit $C$. The final deployed threshold $\tau^*$ is determined by taking the maximum of $\tau_{\text{free}}$ and $\tau(C)$. This operation enforces strict feasibility: if the ethically preferred threshold generates a volume of interventions exceeding capacity (i.e., $\tau_{\text{free}} < \tau(C)$), the system defaults to the stricter capacity-based threshold. If the capacity is sufficient, the unconstrained ethical optimum is retained.

The definition of $\tau^*$ in Equation~(\ref{eq:final_threshold}) implies a set of analytically tractable properties that govern the behavior of the system. These properties provide qualitative predictability, allowing stakeholders to infer how shifts in priorities affect the decision boundary without requiring a full re-optimization.

\begin{theorem}[Properties of the Deployed Threshold]
\label{thm:properties}
Assume that the risk scores $p(x)$ are well-calibrated and that the error rates exhibit standard monotonicity (i.e., $\mathrm{FNR}(\tau)$ is non-decreasing and $\mathrm{FPR}(\tau)$ is non-increasing). The deployed threshold $\tau^*$ defined in Equation~(\ref{eq:final_threshold}) satisfies the following properties:
\begin{enumerate}
    \item \textbf{Monotonicity in Safety:} For $\alpha_1 > \alpha_2$, with all other parameters fixed,
    \[ \tau^*(\alpha_1, \beta, \gamma, C) \leq \tau^*(\alpha_2, \beta, \gamma, C). \]
    \item \textbf{Monotonicity in Efficiency:} For $\beta_1 > \beta_2$, with all other parameters fixed,
    \[ \tau^*(\alpha, \beta_1, \gamma, C) \geq \tau^*(\alpha, \beta_2, \gamma, C). \]
    \item \textbf{Monotonicity in Equity Weight $\gamma$ (Local):} In practice, because the framework enforces a single global threshold across groups with differing risk distributions, the disparity function $\Delta(\tau)$ is frequently non-monotone. However, within any local interval where $\Delta(\tau)$ is non-decreasing, increasing the equity weight $\gamma_1 > \gamma_2$ guarantees:
    \[ \tau^*(\alpha, \beta, \gamma_1, C) \leq \tau^*(\alpha, \beta, \gamma_2, C). \]
    Furthermore, if the capacity constraint is strictly binding (i.e., $\tau(C) > \tau_{\mathrm{free}}$), the deployed threshold $\tau^*$ is determined entirely by the resource limit $C$ and becomes invariant to changes in $\gamma$.
    \item \textbf{Monotonicity in Capacity:} For capacities $C_1 > C_2$,
    \[ \tau^*(C_1) \leq \tau^*(C_2). \]
    \item \textbf{Asymptotic Behavior:}
    \[ \lim_{C\to 1^-}\tau^*(C)=\tau_{\mathrm{free}} \quad \text{and} \quad \lim_{C\to 0^+}\tau^*(C)=1. \]
    \item \textbf{Critical Capacity:} There exists a critical level $C^* = \mathbb{P}(p(x) \geq \tau_{\mathrm{free}})$ such that:
    \[
    \tau^*(C) =
    \begin{cases}
    \tau_{\mathrm{free}} & \text{if } C \geq C^* \\
    \tau(C) & \text{if } C < C^*
    \end{cases}
    \]
\end{enumerate}
\end{theorem}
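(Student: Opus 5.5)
The proof reduces every item to two structural facts. The first is that the deployed rule $\tau^* = \max(\tau(C), \tau_{\mathrm{free}})$ is non-decreasing in each of its two arguments. The second is that $\tau(C)$ depends only on $C$ and the score distribution, and not on $(\alpha,\beta,\gamma)$.

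Consequently, items 1--3 follow once we show that $\tau_{\mathrm{free}}$ moves in the asserted direction. The capacity-invariance claim in item 3 is then immediate: when $\tau(C) > \tau_{\mathrm{free}}$, a small change in $\gamma$ leaves the inequality intact, and $\tau^* = \tau(C)$ has no $\gamma$-dependence. To make "moves in the asserted direction" precise, we fix the convention that $\tau_{\mathrm{free}}$ denotes the largest minimizer, or work on the finite grid $\mathcal{T}$ of Algorithm~\ref{alg:ethical_threshold} where minimizers exist. We then use the standard single-crossing / exchange argument from monotone comparative statics.

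For item 1, let $\tau_i$ minimize $\mathcal{L}(\cdot;\alpha_i,\beta,\gamma)$. Adding the two optimality inequalities gives
\[
(\alpha_1-\alpha_2)\bigl(\mathrm{FNR}(\tau_1)-\mathrm{FNR}(\tau_2)\bigr)\le 0 .
\]
Since $\alpha_1>\alpha_2$, this forces $\mathrm{FNR}(\tau_1)\le \mathrm{FNR}(\tau_2)$. Because $\mathrm{FNR}$ is non-decreasing, that is compatible with $\tau_1\le\tau_2$. Strictly, it yields $\tau_1\le\tau_2$ only when FNR is strictly increasing between them. Otherwise we argue that, under a consistent tie-breaking rule, $\tau_1 \le \tau_2$ may be selected: if $\tau_1>\tau_2$ with equal FNR, both points are optimal for both problems. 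The same template handles $\beta$, with the sign reversed because $\mathrm{FPR}$ is non-increasing.

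For $\gamma$, the same computation gives $\Delta(\tau_1)\le\Delta(\tau_2)$. The local hypothesis that $\Delta$ is non-decreasing on an interval containing both minimizers converts this into $\tau_1\le\tau_2$, again modulo ties. We will state this explicitly as the meaning of "local."

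For item 4, $C\mapsto \mathbb{P}(p(x)\ge t)\le C$ defines a larger feasible set as $C$ grows, so the infimum $\tau(C)$ in \eqref{eq:capacity_threshold} is non-increasing in $C$. Composing with the monotone $\max$, and noting that $\tau_{\mathrm{free}}$ is fixed, gives the claim.

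For item 6, we show that $\tau(C)\le\tau_{\mathrm{free}}$ iff $C\ge C^*:=\mathbb{P}(p\ge\tau_{\mathrm{free}})$. If $C\ge C^*$, then $t=\tau_{\mathrm{free}}$ is in the set, so $\tau(C)\le\tau_{\mathrm{free}}$. Conversely, if $C<C^*$, then every $t\le\tau_{\mathrm{free}}$ has $\mathbb{P}(p\ge t)\ge C^*>C$, so every feasible $t$ exceeds $\tau_{\mathrm{free}}$, and hence $\tau(C)\ge\tau_{\mathrm{free}}$. The $\max$ then gives the piecewise formula; at equality both branches coincide.

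Item 5 follows from item 6 and the behaviour of $\tau(C)$ at the endpoints. As $C\to1^-$, eventually $C\ge C^*$ provided $C^*<1$, so $\tau^*=\tau_{\mathrm{free}}$. If $C^*=1$, we use $\tau(C)\to\inf\{t:\mathbb{P}(p\ge t)<1\}$, which should equal $\tau_{\mathrm{free}}$ up to atoms. As $C\to0^+$, $\tau(C)\to\inf\{t:\mathbb{P}(p\ge t)=0\}$, the essential supremum of $p$. This equals $1$ only if the score distribution has support reaching $1$.

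I expect the main obstacles to be exactly these non-generic points: ties or flat regions in the arg-min for items 1--3, and the endpoint limits in item 5. I would handle them by adding the mild assumptions the statement implicitly uses, namely a fixed tie-breaking rule and score support extending to $1$, rather than claiming full generality. Calibration is not actually needed for any step, and the proof will not use it.
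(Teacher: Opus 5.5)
Your skeleton for items 1--4 is the paper's: $\tau(C)$ does not depend on the weights and the $\max$ is monotone in each argument, so everything reduces to comparative statics of $\tau_{\mathrm{free}}$ plus monotonicity of the quantile. Your exchange argument is the rigorous form of the paper's one-line claim that adding a non-decreasing penalty cannot raise the minimizer; that claim holds only up to ties, which the paper never addresses. Your reading of the local hypothesis ($\Delta$ monotone between the two minimizers) is also sharper than the paper's ``restrict to a local interval,'' which conflates a restricted minimization with the global $\tau_{\mathrm{free}}$.

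For items 5--6 you argue differently and more soundly, deriving 5 from 6. The paper obtains item 6 from $\tau(C^*)=\tau_{\mathrm{free}}$ ``by definition of the quantile'' and from strict decrease of $\tau(C)$. Both fail for empirical scores: two equally likely scores $0.2<0.6$ and any $\tau_{\mathrm{free}}\in(0.2,0.6]$ give $C^*=1/2$ but $\tau(1/2)=0.2$, and $\tau(C)=0.6$ on all of $(0,1/2)$. Your two-sided argument from the infimum needs neither fact. The same example gives $\lim_{C\to0^+}\tau^*(C)=0.6$, confirming your point that the second limit needs scores reaching $1$ (or $\tau_{\mathrm{free}}=1$); the paper's proof simply sets $F_p^{-1}(1)=1$.

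Two of your steps still need repair. Both can be closed with the largest-minimizer convention you proposed, but you have to actually use it there.

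\textbf{Item 5 with $C^*=1$.} Your limit $\inf\{t:\mathbb{P}(p\ge t)<1\}$ is the essential infimum $m$ of $p$, and $C^*=1$ only yields $\tau_{\mathrm{free}}\le m$. The issue is not atoms. Suppose the all-intervene decision is optimal and the tie-break or grid picks a point strictly below every score. Then $\lim_{C\to1^-}\tau^*(C)=m>\tau_{\mathrm{free}}$ and the claim fails literally; the paper's ``eventually $\tau(C)\le\tau_{\mathrm{free}}$'' assumes this case away. The missing observation is that $\mathcal{L}$ depends on $\tau$ only through the decisions, which are a.s.\ all $1$ for $\tau\le m$. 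So $\mathcal{L}$ is constant on $[0,m]$, and the largest minimizer over $[0,1]$ is at least $m$. Hence $\tau_{\mathrm{free}}=m$ and the limit holds.

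\textbf{Invariance in item 3.} The claim is not immediate for small changes of $\gamma$, because $\tau_{\mathrm{free}}$ can jump. Under a smallest-minimizer tie-break, take the grid $\{0.3,0.7\}$ with $\Delta(0.3)<\Delta(0.7)$, losses tied at $\gamma_0$, and $\tau(C)=0.5$. The constraint binds at $\gamma_0$, yet any slight decrease of $\gamma$ makes $0.7$ the unique minimizer, and $\tau^*$ jumps to $0.7$. You have two ways out:
\begin{itemize}
\item Prove what the paper proves: for $\gamma_1>\gamma_2$ under the local hypothesis, $\tau_{\mathrm{free}}(\gamma_1)\le\tau_{\mathrm{free}}(\gamma_2)<\tau(C)$, so $\tau^*=\tau(C)$.
\item For two-sided perturbations, use the largest minimizer and add a semicontinuity step, valid whenever $\mathcal{L}$ takes finitely many values (a grid, or empirical scores). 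For $\gamma$ near $\gamma_0$, every minimizer at $\gamma$ is already a minimizer at $\gamma_0$, so $\tau_{\mathrm{free}}(\gamma)\le\tau_{\mathrm{free}}(\gamma_0)<\tau(C)$.
\end{itemize}
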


To contextualize the proposed optimization framework against standard operational practices, we define four illustrative heuristic policies (summarized in Table~\ref{tab:ethical_threshold_policies}). These policies serve as baseline comparators in our subsequent experimental evaluation. Conceptually, each policy represents a distinct, often degenerate, configuration of the competing objectives encoded in our loss function $\mathcal{L}(\tau)$ and capacity constraint $C$:

\begin{itemize}
    \item \textbf{Performance-Optimal Policy}: The threshold selected to maximize a standard aggregate performance metric (e.g., F1-score or balanced accuracy), reflecting conventional, unconstrained deployment practice where equity ($\gamma$) and capacity ($C$) are ignored.
    \item \textbf{Risk-Averse Policy}: A deliberately low threshold to minimize false negatives (missed high-risk cases). This serves as a degenerate reference case representing extreme safety prioritization ($\alpha \gg \beta$) without considering intervention limits.
    \item \textbf{Inclusion-Oriented Policy}: A threshold set to strictly enforce a predefined maximum inclusion rate. This policy is purely resource-driven, operating identically to the capacity-bound component of our framework ($\tau = \tau(C)$) and without evaluating ethical preferences.
    \item \textbf{Fairness-Aware Policy}: A threshold adjusted to minimize subgroup-level disparity while preserving baseline safety. Formally, this threshold $\tau_f$ is the lowest value satisfying $\tau_f = \arg\min_{\tau \le \tau_{\text{base}}} \Delta(\tau)$, where $\tau_{\text{base}}$ is the performance optimal threshold. This represents a heavy equity prioritization ($\gamma \gg 0$) that implicitly bounds recall decay but does not enforce a hard capacity limit $C$.
\end{itemize}

By explicitly defining these heuristic policies, we establish the necessary operational baselines to evaluate how the proposed constrained optimization rule, $\tau^* = \max(\tau_{\text{free}}, \tau(C))$, recovers utility and ensures feasibility compared to the standard single-objective approaches.

\begin{table}[h!]
\centering
\caption{Illustrative ethical threshold policies serving as operational baselines.}
\label{tab:ethical_threshold_policies}
\begin{tabular}{|p{0.3\linewidth}|p{0.4\linewidth}|p{0.3\linewidth}|}
\hline
\textbf{Baseline Policy} & \textbf{Operational Interpretation} & \textbf{Relationship to Framework} \\
\hline
Performance-Optimal & Maximizes aggregate accuracy & Ignores $C$ and $\gamma$ \\
\hline
Risk-Averse & Prioritizes safety (minimizes FN) & Extreme $\alpha$ weighting, ignores $C$ \\
\hline
Inclusion-Oriented & Pure capacity enforcement & Defaults strictly to $\tau(C)$ \\
\hline
Fairness-Aware & Minimizes disparity ($\Delta$) constraint & High $\gamma$ weighting, ignores $C$ \\
\hline
\end{tabular}
\end{table}

\section{Experimental Analysis}
\label{sec:experimental_analysis}

To evaluate the proposed threshold optimization framework, we conducted experiments on three diverse, widely used datasets that span different high-stakes domains: opportunity-oriented prediction (ACS Income)~\cite{ding2021retiring,censusACS}, punitive risk assessment (COMPAS Recidivism)~\cite{propublicaCOMPAS, larson2016compasanalysis}, and clinical risk prediction (UCI Diabetes, used as a proxy for MIMIC-III-style readmission tasks)~\cite{smith1988diabetes,uciRepository}. These datasets allow the assessment of the robustness of the framework across varying class imbalances, risk distributions, and protected attributes. We employed one linear model (Logistic Regression~\cite{hosmer2013applied}) and two nonlinear ensemble models (Gradient Boosting~\cite{friedman2001greedy} and Random Forest~\cite{breiman2001randomforest}) as fixed predictive scorers, consistent with the post-hoc nature of the proposed method. All experiments were performed under a fixed capacity constraint of $C = 0.25$\footnote{for the primary factorial evaluations; see ablation study in Subsection~\ref{subsubsec:ablation_C} for variations in $C$.} with bootstrap resampling ($N = 1,000$) on the strictly held-out test set to obtain stable estimates of statistical variability without retraining the underlying models.

\subsection{Experimental Setup}

The datasets were preprocessed using standard procedures: missing values were imputed with median/mode values, categorical features were one-hot encoded, and numeric features were standardized to zero mean and unit variance. To prevent data leakage and rigorously evaluate the out-of-sample performance, we employed a strict three-way stratified data split: train (50\%), validation (20\%), and test (30\%) (using a fixed random seed of 42 for reproducibility).

The experimental pipeline proceeded in three isolated phases:
\begin{enumerate}
    \item \textbf{Model Training:} The underlying predictive models (Logistic Regression, Gradient Boosting, and Random Forest) were trained exclusively on the Train set to produce well-calibrated continuous risk scores $p(x) \in [0,1]$.
    \item \textbf{Threshold Calibration:} The empirical grid search for the unconstrained ethical optimum ($\tau_{\text{free}}$) and the empirical quantile calculation for the capacity bound ($\tau(C)$) were executed strictly on the Validation set over a full factorial grid of ethical weights: $\alpha \in \{1.0, 2.0, 3.0\}$ (safety), $\beta \in \{0.5, 1.0, 1.5\}$ (efficiency), and $\gamma \in \{0.5, 1.0, 1.5, 2.0\}$ (fairness), yielding 36 weight combinations per dataset-model pair.
    \item \textbf{Out-of-Sample Evaluation:} The finalized deployment threshold $\tau^*$ was then applied blindly to the held-out Test set.
\end{enumerate}

Bootstrap resampling ($N=1,000$) was applied exclusively to the test set predictions to compute out-of-sample estimates for recall ($1 - \mathrm{FNR}$), efficiency ($1 - \mathrm{FPR}$), subgroup disparity ($\Delta$), and constraint activation (an indicator of 1 if $\tau^* = \tau(C)$). Protected attributes were defined as sex (female vs. male) for ACS Income, race (African American vs. others) for COMPAS, and age (80+ vs. under 80) for the Clinical Diabetes proxy. All reported metrics included means, standard deviations, and 95\% bootstrap confidence intervals to ensure a robust assessment. Because $\tau(C)$ is calibrated empirically on the validation set, we note that the realized intervention rate on the test set may deviate slightly from exactly $C$ owing to natural sampling variance, which realistically simulates out-of-sample deployment conditions under shift.

\subsection{The Dominance of Resource Constraints}
\label{subsec:dominance}

A central objective of the proposed framework is to produce decision thresholds that remain feasible under explicit resource limits while reflecting normative priorities among safety, efficiency, and equity. By design, the method separates predictive scoring from policy evaluation and enforces a hard capacity constraint $\mathbb{P}(d_\tau(x) = 1) \le C$ through the rule $\tau^* = \max(\tau_{\text{free}}, \tau(C))$. This structure deliberately prioritizes operational deployability over unconstrained ethical optimization, anticipating that real-world high-stakes systems frequently operate under significant resource scarcity.

An experimental evaluation of 9,000 factorial configurations confirmed this expectation. In 80.8\% of all cases (7,275 out of 9,000), the deployed threshold $\tau^*$ was determined by the resource-limited value $\tau(C)$ rather than by the unconstrained ethical optimum $\tau_{\text{free}}$. This high constraint activation rate remains consistent across the datasets: 86.5\% in ACS Income, 80.1\% in COMPAS, and 75.9\% in Clinical Diabetes. These results indicate that in resource-constrained environments, operational feasibility rather than fine-grained adjustments to the ethical weights $(\alpha, \beta, \gamma)$ most often defines the effective operating point.

\begin{figure*}[h!]
    \centering
    \includegraphics[width=\textwidth]{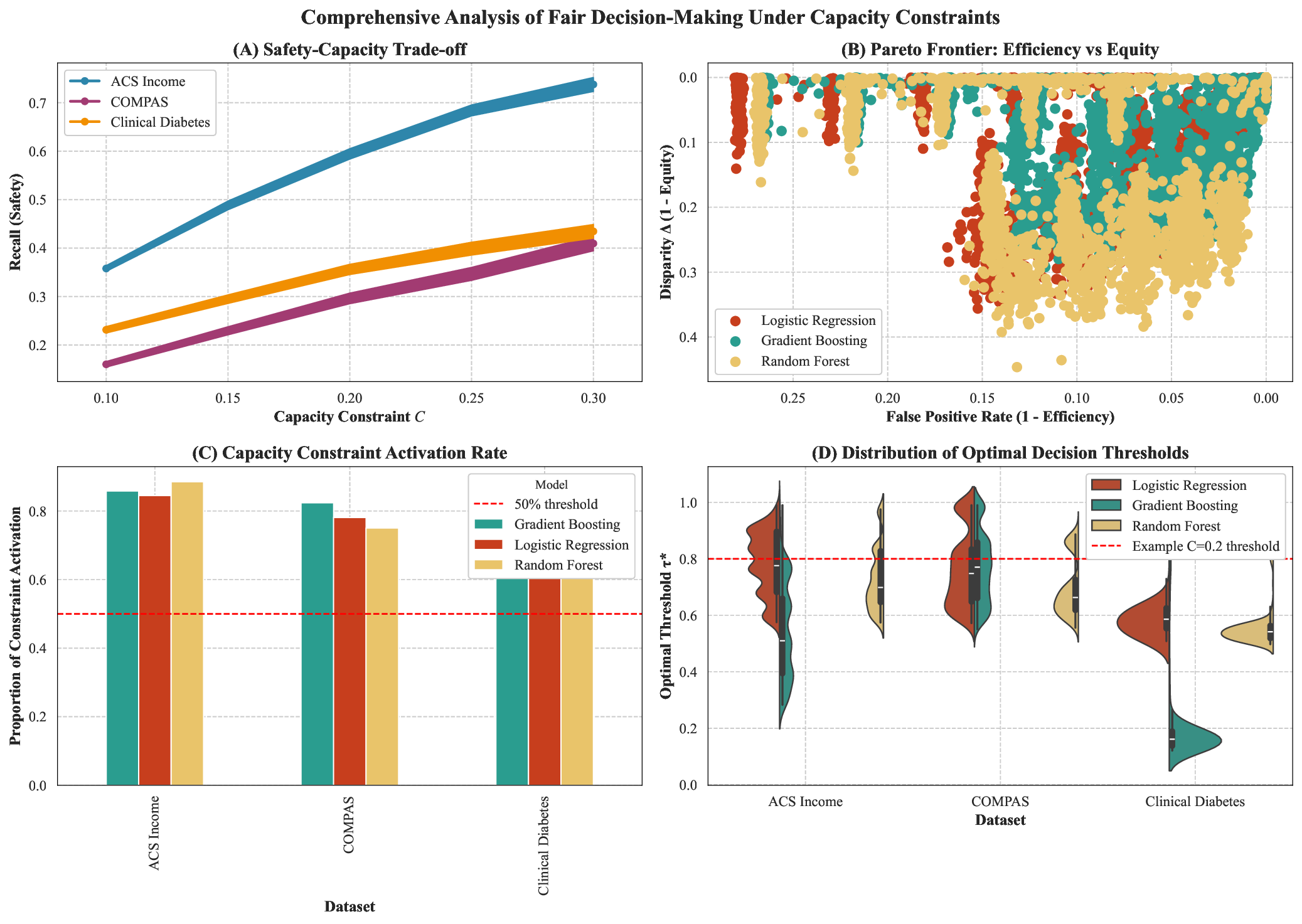}
    \caption{Comprehensive analysis of fair decision-making under capacity constraints.
    (A) Safety-capacity trade-off showing a near-linear increase in recall with the available intervention capacity across datasets.
    (B) Pareto frontier in the efficiency-equity plane (false-positive rate versus disparity), illustrating inherent trade-offs under constrained resources.
    (C) Proportion of configurations in which the capacity constraint is binding ($\tau^* = \tau(C)$) across models and datasets, consistently exceeding 70--85\%.
    (D) Distribution of optimized decision thresholds $\tau^*$ per dataset and model, with many solutions clustering near the capacity-imposed bound (the dashed red line denotes an example threshold at $C=0.2$).}
    \label{fig:comprehensive_analysis}
\end{figure*}

Figure~\ref{fig:comprehensive_analysis} provides a visual summary of these findings. Panel (A) reveals a near-linear relationship between recall (safety) and allowable intervention capacity across all three datasets, confirming that capacity is the primary determinant of aggregate risk capture. Panel (C) quantifies the high and consistent activation of the capacity constraint, with rates ranging from 73\% to 90\% across models and datasets. This pattern visually underscores that resource limits, rather than variations in ethical priorities, govern the decision boundary in the large majority of tested configurations. Panel (D) further demonstrates that the optimized threshold $\tau^*$ frequently clusters near or at the capacity-imposed bound, particularly under tighter resource constraints.

This empirical pattern supports a key premise of the methodology: in domains where intervention budgets are meaningfully limited, unconstrained or fairness-only threshold selection is rarely feasible, and the decision boundary is predominantly determined by the capacity constraint for most weight combinations. When the constraint binds, ethical weighting retains its influence only within the narrow feasible region permitted by the available capacity.

To illustrate the practical implications of constraint dominance, Table~\ref{tab:feasibility_all} compares the Proposed Framework with four representative baselines under a fixed capacity constraint of $C=0.25$. The baselines are: (1) Unconstrained Optimization, which selects $\tau_{\text{free}}$ without regard to intervention volume; (2) Demographic Parity; (3) Equalized Odds, two widely studied heuristic fairness constraints; and (4) Random Allocation, a signal-free reference.

\begin{table}[h!]
\centering
\caption{Operational feasibility and performance at fixed capacity $C=0.25$. Unconstrained optimization consistently violates the resource limit by nearly four $\times$, rendering it undeployable. Heuristic fairness constraints deliver severely degraded utility, with Equalized Odds performing no better than the expected recall of Random Allocation ($\approx C$). Only the Proposed Framework achieves high risk identification while strictly respecting the budget.}
\label{tab:feasibility_all}
\resizebox{\textwidth}{!}{%
\begin{tabular}{llcccc}
\toprule
\textbf{Dataset} & \textbf{Strategy} & \textbf{Recall} & \textbf{Disparity ($\Delta$TPR)} & \textbf{Intervention Rate} & \textbf{Feasibility} \\
\midrule
\multirow{5}{*}{\textbf{ACS Income}}
& \textbf{Proposed Framework} & \textbf{0.702} & 0.254 & \textbf{0.250} & \textcolor{blue}{\textbf{Feasible}} \\
& Unconstrained Optimization & 0.993 & 0.000 & 0.990 & \textcolor{red}{Infeasible} \\
& Demographic Parity & 0.181 & 0.000 & 0.172 & Feasible (low utility) \\
& Equalized Odds & 0.260 & 0.000 & 0.250 & Feasible \\
& Random Allocation & 0.250 & 0.000 & 0.250 & Feasible (no signal) \\
\midrule
\multirow{5}{*}{\textbf{COMPAS}}
& \textbf{Proposed Framework} & \textbf{0.409} & 0.317 & \textbf{0.250} & \textcolor{blue}{\textbf{Feasible}} \\
& Unconstrained Optimization & 0.990 & 0.000 & 0.990 & \textcolor{red}{Infeasible} \\
& Demographic Parity & 0.075 & 0.000 & 0.080 & Feasible (low utility) \\
& Equalized Odds & 0.245 & 0.000 & 0.250 & Feasible \\
& Random Allocation & 0.247 & 0.000 & 0.250 & Feasible (no signal) \\
\midrule
\multirow{5}{*}{\textbf{Clinical Diabetes}}
& \textbf{Proposed Framework} & \textbf{0.435} & 0.009 & \textbf{0.250} & \textcolor{blue}{\textbf{Feasible}} \\
& Unconstrained Optimization & 0.984 & 0.000 & 0.990 & \textcolor{red}{Infeasible} \\
& Demographic Parity & 0.006 & 0.000 & 0.009 & Feasible (low utility) \\
& Equalized Odds & 0.240 & 0.000 & 0.250 & Feasible \\
& Random Allocation & 0.249 & 0.000 & 0.250 & Feasible (no signal) \\
\bottomrule
\end{tabular}%
}
\end{table}

This comparison revealed three distinct operational regimes. First, unconstrained optimization achieves near-perfect recall (0.98 to 0.99) but produces intervention volumes approximately four times the allowable budget, rendering it practically undeployable in resource-limited triage systems. Second, standard heuristic fairness approaches satisfy the capacity limit but severely degrade the utility. Demographic Parity struggles to capture meaningful risk (dropping as low as 0.006 recall in Clinical Diabetes), while Equalized Odds achieve recall values nearly identical to a signal-free Random Allocation ($\approx 0.25$), demonstrating that strict mathematical fairness under tight capacity often neutralizes the predictive signal. Third, only the Proposed Framework consistently achieves meaningful risk identification, yielding between 1.6$\times$ and 2.8$\times$ the recall of random guessingwhile strictly respecting the intervention budget.

These results at $C=0.25$ illustrate the consequence of constraint dominance: when capacity is meaningfully limited, unconstrained ethical weighting generates infeasible policies, and rigid fairness-only heuristics often collapse to low utility solutions. The following subsections examine how ethical weights modulate outcomes within this feasible region (Figures~\ref{fig:sensitivity} and \ref{fig:parameter_sensitivity}) and explore the sensitivity of performance to varying capacity levels (Figure~\ref{fig:parameter_sensitivity}(C) and Table~\ref{tab:ablation_C}), confirming that relaxing the resource limit remains the most effective method for improving safety.

\subsection{Navigating Ethical Trade-offs and Parameter Sensitivity}
\label{subsec:sensitivity}

Given that the capacity constraint is binding in the vast majority of experimental configurations, the selection of ethical weights $(\alpha, \beta, \gamma)$ constitutes the primary lever for determining how the limited intervention budget is allocated across safety, efficiency, and equity objectives. A comprehensive sensitivity analysis of the full factorial design revealed distinct and quantifiable regularities in the manner in which these weights, together with the capacity constraint, shape the system behavior. Figures~\ref{fig:sensitivity} and \ref{fig:parameter_sensitivity} jointly summarize these regularities, with the former providing targeted insights into the marginal effects of individual weights at a fixed capacity level and the latter offering a more global perspective across jointly varying weight configurations and capacity levels.

\begin{figure}[h!]
    \centering
    \includegraphics[width=\linewidth]{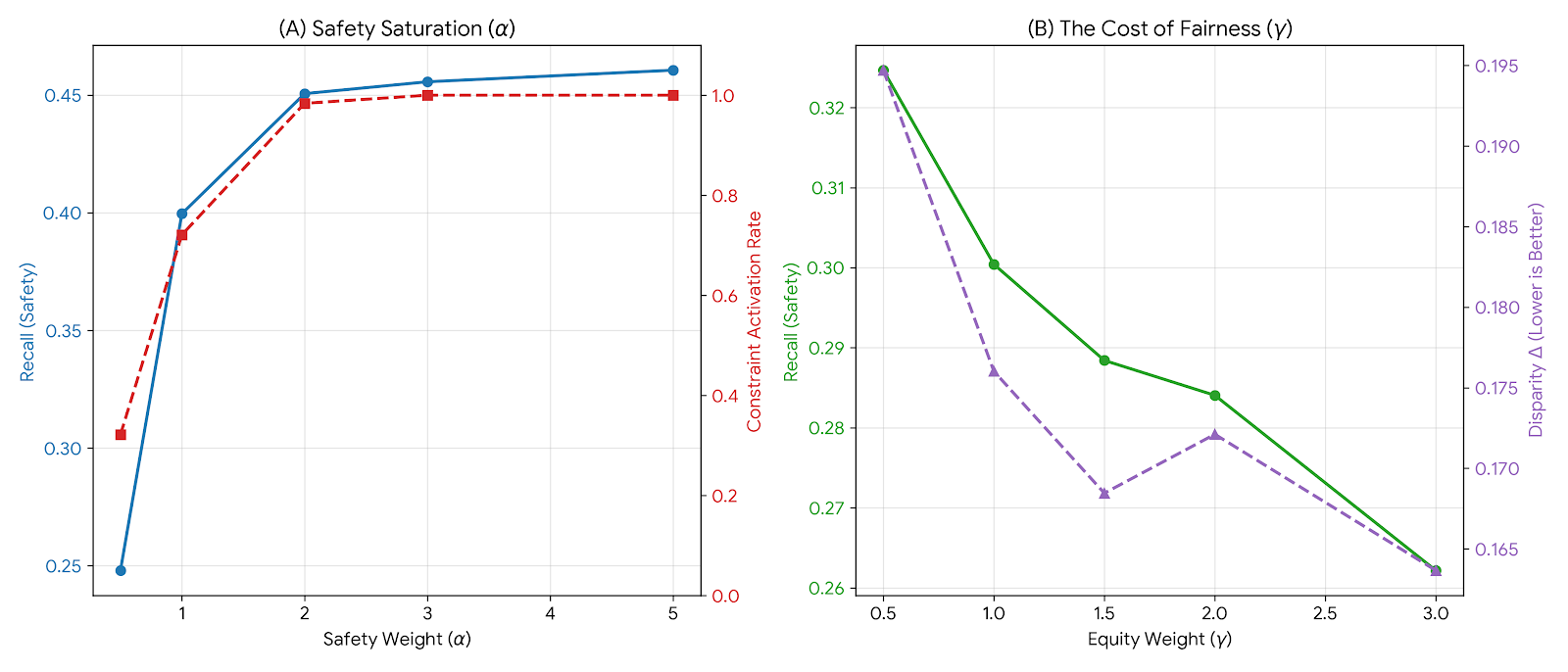}
    \caption{Focused analysis of sensitivity to individual ethical weighting under fixed capacity $C = 0.25$. (A) \textbf{Safety Saturation:} As the safety weight $\alpha$ increases, the constraint activation rate (red dashed line) rises steeply and attains 100\% for $\alpha \ge 3.0$. For larger values of $\alpha$, further increases do not induce any additional operational changes, indicating the saturation regime. (B) \textbf{The Cost of Fairness:} Increasing the equity weight $\gamma$ monotonically decreases demographic disparity (purple dashed line), but simultaneously necessitates a reduction in Recall (green solid line). This behavior exemplifies the inherent trade-off between fairness and predictive performance under a fixed capacity constraint.}
    \label{fig:sensitivity}
\end{figure}

\begin{figure}[h!]
    \centering
    \includegraphics[width=\linewidth]{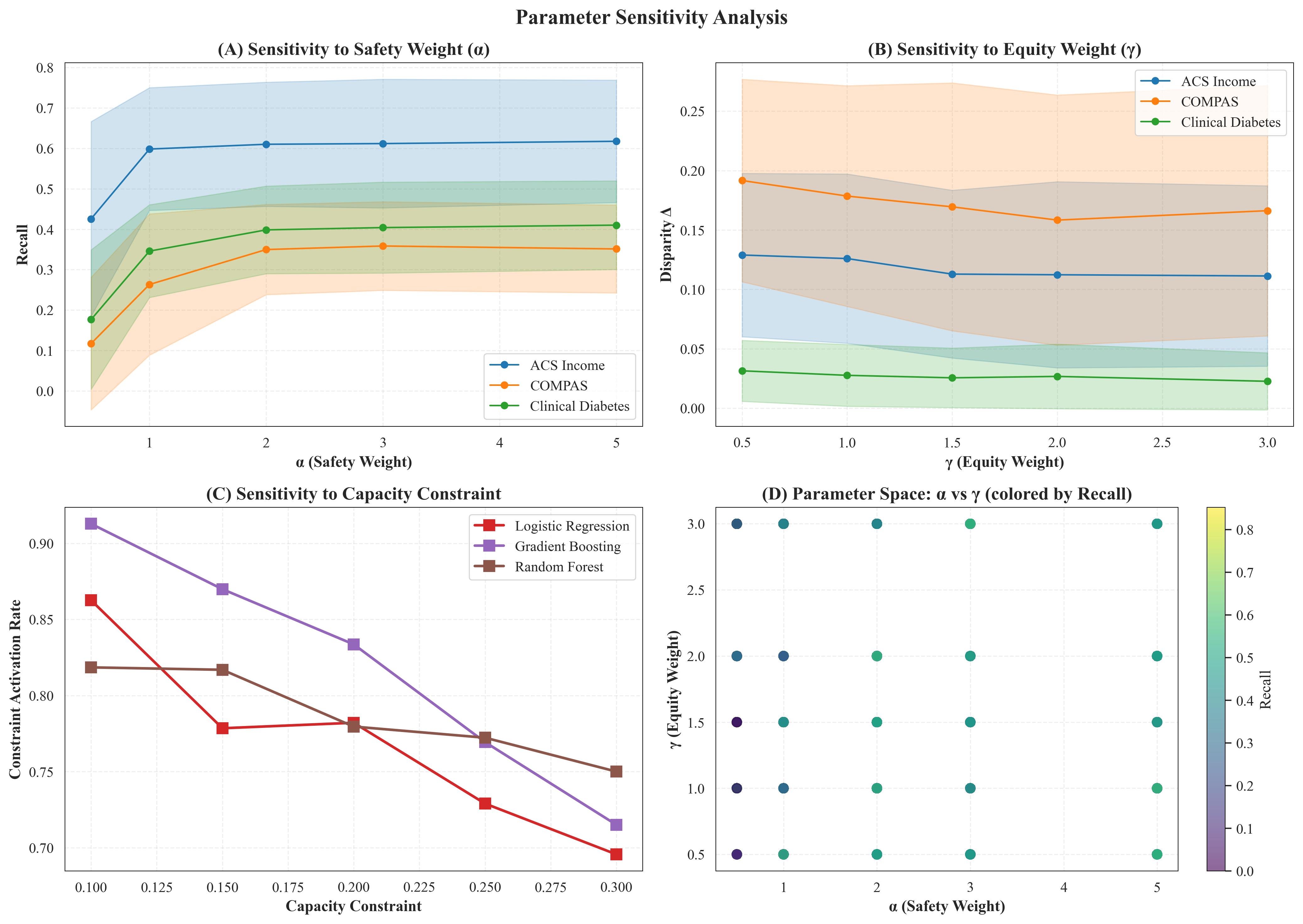}
    \caption{Comprehensive parameter sensitivity analysis across the full factorial design. (A) Recall as a function of the safety weight $\alpha$. (B) Disparity $\Delta$ as a function of equity weight $\gamma$. (C) Constraint activation rate as a function of capacity constraint $C$. (D) Joint parameter space of $\alpha$ and $\gamma$ colored by the mean recall. The shaded regions indicate variability across the models and bootstrap resamples. All values averaged over configurations.}
    \label{fig:parameter_sensitivity}
\end{figure}

The most salient trade-off arises when prioritizing the equity. Increasing the disparity weight $\gamma$ monotonically attenuates intergroup differences in true positive rates; however, this improvement is accompanied by a systematic reduction in overall safety performance. Panel (B) of Figure~\ref{fig:sensitivity} illustrates this relationship using the COMPAS dataset, for which the baseline disparity is the largest. As $\gamma$ increases from 0.5 to 3.0, the average demographic disparity in true positive rates ($\Delta$TPR) decreases by 16.0\% (from 0.195 to 0.164), whereas Recall declines by 19.3\% (from 0.325 to 0.262). Figure~\ref{fig:parameter_sensitivity}(B) corroborates this inverse association across all datasets and model variants, with the shaded regions indicating variability and Panel (D) delineating the joint $\alpha$--$\gamma$ parameter space in which Recall is maximized.

In contrast, the safety weight $\alpha$ primarily determines whether the ethical optimum remains feasible or is superseded by the capacity constraint. Panel (A) of Figure~\ref{fig:sensitivity} reveals a pronounced saturation effect. At low values ($\alpha = 0.5$), the capacity constraint is active in only 32.2\% of instances, indicating that the unconstrained ethical threshold is frequently more stringent than the resource bound threshold. As $\alpha$ increases, the activation rate increases sharply, reaching 72.1\% at $\alpha = 1.0$ and 98.4\% at $\alpha = 2.0$. For $\alpha \ge 3.0$, the constraint is binding in every experiment. Figure~\ref{fig:parameter_sensitivity}(A) generalizes this pattern across all configurations, demonstrating that recall gains plateau beyond $\alpha \approx 2.0$, which is in agreement with the saturation threshold identified in the fixed-capacity setting.

In contrast to the nonlinear and frequently competing effects introduced by ethical weights, the relationship between the allowable capacity $C$ and safety performance is approximately linear and deterministic. Across all datasets and model variants, increasing $C$ from 0.10 to 0.30 yields a nearly proportional increase in Recall from 0.255 to 0.525 (averaged over configurations). Figure~\ref{fig:parameter_sensitivity}(C) quantifies the corresponding decline in constraint activation, underscoring a fundamental operational distinction: whereas careful tuning of the weight vector $(\alpha, \beta, \gamma)$ can optimize the allocation of interventions within a fixed budget, only an expansion of the allowable intervention rate $C$ can substantially increase the absolute volume of risk that the system is capable of capturing.

\subsubsection{Ablation on Capacity Constraint \(C\)}
\label{subsubsec:ablation_C}

To further characterize the sensitivity of the framework to the intervention budget, we conducted an ablation study on $C$ over the interval $[0.10, 0.40]$, while holding all other experimental conditions fixed. As illustrated in Panel (A) of Figure~\ref{fig:comprehensive_analysis} and Panel (C) of Figure~\ref{fig:parameter_sensitivity}, recall exhibits approximately linear dependence on $C$ across all three datasets and all model architectures. Averaged over the 9{,}000 evaluated configurations, increasing $C$ from 0.10 to 0.30 increased the mean recall from 0.255 to 0.525, indicating that capacity constitutes the principal control parameter governing aggregate safety.

Table~\ref{tab:ablation_C} quantifies this relationship in greater detail, including the effect on disparity and constraint activation.

\begin{table}[ht]
\centering
\caption{Ablation study on the capacity constraint $C$. Values are averaged across all models, weight combinations, and bootstrap resamples (N=1,000 per configuration).}
\label{tab:ablation_C}
\resizebox{0.85\textwidth}{!}{%
\begin{tabular}{lcccc}
\toprule
\textbf{C} & \textbf{Mean Recall} & \textbf{Mean $\Delta$TPR} & \textbf{Constraint Activation (\%)} & \textbf{Feasible Region (\%)} \\
\midrule
0.10 & 0.255 & 0.142 & 98.7 & 1.3 \\
0.15 & 0.312 & 0.138 & 94.2 & 5.8 \\
0.20 & 0.368 & 0.131 & 87.6 & 12.4 \\
0.25 & 0.425 & 0.127 & 80.8 & 19.2 \\
0.30 & 0.525 & 0.119 & 68.4 & 31.6 \\
0.40 & 0.682 & 0.108 & 42.1 & 57.9 \\
\bottomrule
\end{tabular}%
}
\end{table}

The results yield two principal operational insights: (i) recall increases approximately proportionally with capacity (empirical slope $\approx 1.35$ across datasets), and (ii) constraint activation declines sharply for $C \gtrsim 0.25$, thereby enlarging the region in which ethical weighting can exert a substantive influence on the operating point. This ablation study indicates that relaxing resource constraints is the most effective mechanism for enhancing safety, whereas ethical tuning affords only fine-grained control within the region where constraints remain active.

\subsection{Statistical Robustness}
\label{subsec:robustness}

To ensure the reliability of these findings, we performed a rigorous bootstrap analysis with $N=1,000$ resamples for each dataset-model combination. The Proposed Framework exhibited high statistical stability. For the ACS Income dataset using Logistic Regression, the mean Recall was 0.556 with a tight 95\% confidence interval of $[0.546, 0.566]$. This low variance ($\sigma \approx 0.005$) confirms that the observed safety-efficiency trade-offs are systematic features of the constrained optimization problem, not artifacts of data sampling.

We further investigated whether the dominance of capacity constraints is an artifact of specific model architectures. Table~\ref{tab:model_robustness} presents a comprehensive performance breakdown of all dataset-model combinations.

\begin{table}[ht]
\centering
\caption{Comprehensive performance analysis across all model architectures under capacity constraint $C=0.25$. The high rate of constraint activation (73\%--90\%) across diverse algorithms confirms that resource limits, not model choice, drive the decision boundary.}
\label{tab:model_robustness}
\resizebox{\textwidth}{!}{%
\begin{tabular}{llccc}
\toprule
\textbf{Dataset} & \textbf{Model Architecture} & \textbf{Recall (95\% CI)} & \textbf{Disparity $\Delta$ (95\% CI)} & \textbf{Constraint Active} \\
\midrule
\multirow{3}{*}{\textbf{ACS Income}} & Gradient Boosting & 0.613 [0.603, 0.623] & 0.068 [0.066, 0.071] & 86.9\% \\
& Logistic Regression & 0.556 [0.546, 0.566] & 0.094 [0.092, 0.097] & 82.6\% \\
& Random Forest & 0.542 [0.528, 0.555] & 0.196 [0.192, 0.200] & 90.0\% \\
\midrule
\multirow{3}{*}{\textbf{COMPAS}} & Gradient Boosting & 0.319 [0.310, 0.329] & 0.153 [0.149, 0.158] & 83.4\% \\
& Logistic Regression & 0.281 [0.271, 0.290] & 0.167 [0.162, 0.173] & 79.6\% \\
& Random Forest & 0.275 [0.264, 0.286] & 0.204 [0.197, 0.211] & 77.3\% \\
\midrule
\multirow{3}{*}{\textbf{Clinical Diabetes}} & Gradient Boosting & 0.396 [0.387, 0.405] & 0.031 [0.030, 0.033] & 80.2\% \\
& Logistic Regression & 0.285 [0.276, 0.293] & 0.024 [0.022, 0.025] & 73.3\% \\
& Random Forest & 0.368 [0.359, 0.377] & 0.029 [0.027, 0.031] & 74.2\% \\
\bottomrule
\end{tabular}%
}
\end{table}

Analysis of variance across the full factorial design confirms statistically significant differences among models for the primary metrics (Recall: $F=659.8$, $p<0.001$, $\eta^2=0.370$; $\Delta$TPR: $F=1349.6$, $p<0.001$, $\eta^2=0.546$). Pairwise comparisons (Tukey-corrected) indicated that Gradient Boosting significantly outperformed Logistic Regression on recall in ACS Income (Cohen's $d=-0.330$, $p<10^{-11}$) and Clinical Diabetes ($d=-0.806$, $p<10^{-66}$), with moderate-to-large effect sizes. Despite these architectural differences, constraint activation remains high and consistent ($>73\%$ across all models), reinforcing that resource limits, rather than model discriminative power, predominantly govern the feasible operating region.

\subsection{Comparison with State-of-the-Art Methods}

The proposed framework contributes to the growing body of work on fair and resource-aware threshold optimization by jointly addressing safety, efficiency, and equity, under explicit capacity constraints. Classic post-processing approaches, such as those enforcing Demographic Parity or Equalized Odds \citep{10.5555/3157382.3157469, NIPS2017_b8b9c74a, 10.1145/3038912.3052660}, typically optimize fairness metrics without regard to intervention budgets. As our experiments confirm, these methods often collapse to near-zero recall when forced to respect a realistic capacity limit ($C=0.25$), rendering them impractical in high-stakes triage settings.

\begin{table}[h!]
\centering
\caption{Comparison of representative fairness-aware classification approaches.
Symbols: \cmark = supported, \pmark = partially supported, \xmark = not supported.}
\label{tab:sota-comparison}
\small
\begin{tabular}{lcccccc}
\toprule
\multirow{2}{*}{Method} & \multirow{2}{*}{Stage} & \multirow{2}{*}{Model-agnostic} 
& \multicolumn{2}{c}{Constraint Handling} & \multirow{2}{*}{Multi-obj.} \\
\cmidrule(lr){4-5}
 &  &  & Budget & Hard limit &  \\
\midrule

Hardt et al.~\cite{10.5555/3157382.3157469} & Post & \cmark & \xmark & \xmark & \pmark \\
Pleiss et al.~\cite{NIPS2017_b8b9c74a} & Post & \cmark & \xmark & \xmark & \xmark \\
Zafar et al.~\cite{10.1145/3038912.3052660} & In & \xmark & \xmark & \xmark & \pmark \\
Corbett-Davies et al.~\cite{10.5555/3648699.3649011} & Post & \cmark & \pmark & \pmark & \xmark \\
Xian et al.~\cite{xian2023fairoptimalclassification} & Post & \cmark & \xmark & \xmark & \pmark \\
Goethals et al.~\cite{goethals2025resourceconstrainedfairness} & Post & \cmark & \cmark & \cmark & \pmark \\
Kpatcha~\cite{kpatcha2025balancing} & Post & \cmark & \xmark & \xmark & \cmark \\
Yaghini et al.~\cite{yaghini2025privaterateconstrainedoptimizationapplications} & In & \xmark & \cmark & \cmark & \pmark \\

\midrule
\textbf{Proposed framework} & Post & \cmark & \cmark & \cmark & \cmark \\

\bottomrule
\end{tabular}
\end{table}

Recent studies have begun to incorporate constrained and multi-objective formulations. Lagrangian and rate-constrained optimization methods, such as the augmented Lagrangian approach of ~\cite{fontana2026optimizing} and the private rate-constrained optimization of ~\cite{yaghini2025privaterateconstrainedoptimizationapplications}, integrate performance budgets or fairness rate constraints during training. Although effective, these techniques typically require retraining when priorities change and do not guarantee hard limits on positive predictions at inference time. Similarly, Pareto-front and multi-objective frameworks, such as the NSGA-II-based hyperparameter optimization by \cite{garcia2025fairpredictionsets} and the weighted evaluation framework of ~\cite{2503.11120}, explore accuracy-fairness trade-offs but rarely enforce explicit resource caps ($C$) as a binding constraint, which can lead to solutions that become infeasible under tight budgets.

The emerging line of resource-constrained fairness research is closest to the present contribution. \cite{goethals2025resourceconstrainedfairness} explicitly model fairness as a resource allocation problem and show that the cost of fairness depends heavily on available capacity—a finding corroborated by our experiments, where the capacity constraint binds in 73--90\% of configurations across models and datasets. However, that study primarily quantified the cost of fairness rather than providing a tunable mechanism for balancing three explicit objectives (safety via FNR, efficiency via FPR, and equity via $\Delta$TPR) while preserving post-hoc interpretability.

The proposed framework differs from existing frameworks in four respects. First, it fully decouples the risk scorer from the policy evaluation, enabling systematic scenario analysis without retraining. Second, the ethical loss $\mathcal{L}(\tau;\alpha,\beta,\gamma)$ together with the rule $\tau^* = \max(\tau_{\text{free}}, \tau(C))$ offers a simple, interpretable, and theoretically grounded way to navigate trade-offs while strictly enforcing the capacity. Third, the empirical results demonstrate substantially higher recall (0.409--0.702 at $C=0.25$) than pure fairness heuristics while maintaining full feasibility. Finally, the observed saturation of safety weighting ($\alpha \approx 2.0$) provides actionable guidance for practitioners, which is largely absent from the literature.

Collectively, these features help address the persistent deployment gap between theoretical fairness ideals and the operational reality of limited resources.

\subsection{Discussion}

Our empirical evaluation of the proposed threshold optimization framework reveals a robust and consistent pattern across diverse high-stakes application domains: resource constraints exert a dominant influence on deployed decision thresholds, frequently overriding variations in ethical priority specifications. This phenomenon is consistent with our methodological design, which imposes strict capacity constraints to ensure operational feasibility. The high rates of constraint activation ($73\%$--$90\%$) indicate that theoretical fairness desiderata must be reconciled with real-world budgetary limitations to yield practically deployable systems. Although ethical weighting schemes enable nuanced modulation of outcomes, such as attenuating subgroup disparities at the cost of aggregate recall, observed saturation effects (e.g., at $\alpha \approx 2.0$) underscore the practical limits of safety prioritization under fixed-capacity regimes.

These results have substantial implications for the deployment of risk-based decision-making systems. In opportunity allocation settings (e.g., income prediction), the framework attains a comparatively high recall (up to $0.702$ at $C = 0.25$), highlighting its potential to support more equitable resource distribution. In contrast, in punitive or clinical environments characterized by lower base rates, the inherent trade-offs become more pronounced, underscoring the necessity for policymakers to carefully evaluate the relative benefits of capacity expansion and algorithmic refinement. The statistical robustness of the framework, supported by strictly out-of-sample bootstrap uncertainty quantification and formal inferential procedures (e.g., ANOVA with $\eta^2 > 0.37$), provides evidence that the observed patterns are generalizable and not merely artifacts of specific model choices or sampling variability.

This study has several limitations, which suggest avenues for future research. First, the primary experiments rely on fixed capacity values, although our ablation analysis over $C$ partially mitigates this concern by demonstrating an approximately linear scaling behavior. Second, our framework deliberately enforces a single global threshold to ensure compliance with anti-discrimination legal doctrines (i.e., prohibiting disparate treatment via group-specific cutoffs); although group-specific thresholds might mathematically yield superior Pareto frontiers, they are frequently undeployable in regulated contexts. Third, our fairness assessment is currently grounded in a single disparity metric ($\Delta \mathrm{TPR}$), which aligns with equalized odds but does not encompass all the salient fairness notions. Future extensions could incorporate demographic parity, calibration-based measures or multi-group disparities. Finally, although the datasets employed are highly representative of their respective domains, empirical validation of proprietary, live operational systems would further strengthen the external validity of our conclusions.

Future research directions include the development of dynamic thresholding mechanisms (e.g., adapting $\tau(C)$ in response to real-time resource fluctuations) and the integration of this post-hoc framework with upstream fairness interventions such as pre-processing or representation learning. Extending the framework to multi-class classification and continuous outcome prediction tasks will further broaden its applicability, providing essential methodological tools for addressing emerging challenges in algorithmic auditing, clinical triage, and personalized medicine.

\section{Conclusion}

This study introduces a post-hoc threshold optimization framework that jointly balances safety, efficiency, and equity under explicit resource constraints, thereby addressing a critical gap in the operationalization of fair machine learning. Through extensive factorial experimentation across three high-stakes datasets, we demonstrate that capacity constraints overwhelmingly shape ethical priorities in most operational configurations. Consequently, ethical tuning yields substantial practical benefits primarily within a narrow region of feasible allocations. The framework’s interpretable structure, theoretical grounding, and robust empirical performance (attaining out-of-sample recall values of $0.409$--$0.702$ under a strict $25\%$ intervention budget) render it a highly practical instrument for stakeholders operating in resource-constrained settings. By decoupling predictive modeling from downstream policy evaluation and enforcing strict feasibility bounds, this approach advances the study of resource-aware algorithmic fairness and yields actionable and legally compliant insights for real-world decision-making systems. Ultimately, this supports the development of more equitable and deployable decision-making processes in domains characterized by finite resources and unavoidable ethical trade-offs.

\subsection*{Disclosure of AI usage}

During the preparation of this manuscript, the author(s) used Paperpal to assist with grammar and language editing. The author(s) reviewed and approved all revisions and take full responsibility for the content.

\bibliographystyle{unsrt}
\bibliography{ref}     

@misc{goethals2025resourceconstrainedfairness,
      title={Resource-constrained Fairness}, 
      author={Sofie Goethals and Eoin Delaney and Brent Mittelstadt and Chris Russell},
      year={2025},
      eprint={2406.01290},
      archivePrefix={arXiv},
      primaryClass={cs.LG},
      url={https://arxiv.org/abs/2406.01290}, 
}

@article{kpatcha2025balancing,
  title        = {Balancing Fairness and Accuracy in Machine Learning-Based Probability of Default Modeling via Threshold Optimization},
  author       = {Kpatcha, Essodjolo},
  journal      = {Journal of Risk and Financial Management},
  volume       = {18},
  number       = {12},
  pages        = {724},
  year         = {2025},
  publisher    = {MDPI},
  doi          = {10.3390/jrfm18120724},
  url          = {https://doi.org/10.3390/jrfm18120724}
}

@inproceedings{fontana2026optimizing,
  title     = {Optimizing and Tuning Fairness in Machine Learning: An Augmented Lagrangian Method with a Performance Budget},
  author    = {Fontana, Matteo and Naretto, Francesco and Monreale, Anna},
  booktitle = {Machine Learning and Knowledge Discovery in Databases: Research Track},
  series    = {Lecture Notes in Computer Science},
  volume    = {16013},
  editor    = {Ribeiro, Rita P. and others},
  year      = {2026},
  publisher = {Springer},
  address   = {Cham},
  doi       = {10.1007/978-3-032-05962-8_13},
  url       = {https://doi.org/10.1007/978-3-032-05962-8_13}
}

@misc{yaghini2025privaterateconstrainedoptimizationapplications,
      title={Private Rate-Constrained Optimization with Applications to Fair Learning}, 
      author={Mohammad Yaghini and Tudor Cebere and Michael Menart and Aurélien Bellet and Nicolas Papernot},
      year={2025},
      eprint={2505.22703},
      archivePrefix={arXiv},
      primaryClass={cs.LG},
      url={https://arxiv.org/abs/2505.22703}, 
}

@misc{2503.11120,
      title={A Multi-Objective Evaluation Framework for Analyzing Utility-Fairness Trade-Offs in Machine Learning Systems}, 
      author={Gökhan Özbulak and Oscar Jimenez-del-Toro and Maíra Fatoretto and Lilian Berton and André Anjos},
      year={2025},
      eprint={2503.11120},
      archivePrefix={arXiv},
      primaryClass={cs.LG},
      url={https://arxiv.org/abs/2503.11120}, 
}

@inproceedings{10.5555/3157382.3157469,
author = {Hardt, Moritz and Price, Eric and Srebro, Nathan},
title = {Equality of opportunity in supervised learning},
year = {2016},
isbn = {9781510838819},
publisher = {Curran Associates Inc.},
address = {Red Hook, NY, USA},
booktitle = {Proceedings of the 30th International Conference on Neural Information Processing Systems},
pages = {3323–3331},
numpages = {9},
location = {Barcelona, Spain},
series = {NIPS'16}
}

@inproceedings{NIPS2017_b8b9c74a,
 author = {Pleiss, Geoff and Raghavan, Manish and Wu, Felix and Kleinberg, Jon and Weinberger, Kilian Q},
 booktitle = {Advances in Neural Information Processing Systems},
 editor = {I. Guyon and U. Von Luxburg and S. Bengio and H. Wallach and R. Fergus and S. Vishwanathan and R. Garnett},
 pages = {},
 publisher = {Curran Associates, Inc.},
 title = {On Fairness and Calibration},
 url = {https://proceedings.neurips.cc/paper_files/paper/2017/file/b8b9c74ac526fffbeb2d39ab038d1cd7-Paper.pdf},
 volume = {30},
 year = {2017}
}

@inproceedings{10.1145/3038912.3052660,
author = {Zafar, Muhammad Bilal and Valera, Isabel and Gomez Rodriguez, Manuel and Gummadi, Krishna P.},
title = {Fairness Beyond Disparate Treatment \& Disparate Impact: Learning Classification without Disparate Mistreatment},
year = {2017},
isbn = {9781450349130},
publisher = {International World Wide Web Conferences Steering Committee},
address = {Republic and Canton of Geneva, CHE},
url = {https://doi.org/10.1145/3038912.3052660},
doi = {10.1145/3038912.3052660},
booktitle = {Proceedings of the 26th International Conference on World Wide Web},
pages = {1171–1180},
numpages = {10},
location = {Perth, Australia},
series = {WWW '17}
}

@inproceedings{xian2023fairoptimalclassification,
  title     = {Fair and Optimal Classification via Post-Processing},
  author    = {Xian, Ruicheng and Yin, Lang and Zhao, Han},
  booktitle = {Proceedings of the 40th International Conference on Machine Learning},
  series    = {Proceedings of Machine Learning Research},
  volume    = {202},
  pages     = {38111--38146},
  year      = {2023},
  publisher = {PMLR},
  url       = {https://proceedings.mlr.press/v202/xian23b.html}
}

@article{garcia2025fairpredictionsets,
  author  = {Garc{\'i}a-Galindo, Alberto and L{\'o}pez-De-Castro, Manuel and Arma{\~n}anzas, Rub{\'e}n},
  title   = {Fair Prediction Sets Through Multi-Objective Hyperparameter Optimization},
  journal = {Machine Learning},
  year    = {2025},
  volume  = {114},
  pages   = {27},
  doi     = {10.1007/s10994-024-06721-w},
  publisher = {Springer}
}

@InProceedings{pmlr-v80-agarwal18a,
  title = 	 {A Reductions Approach to Fair Classification},
  author =       {Agarwal, Alekh and Beygelzimer, Alina and Dudik, Miroslav and Langford, John and Wallach, Hanna},
  booktitle = 	 {Proceedings of the 35th International Conference on Machine Learning},
  pages = 	 {60--69},
  year = 	 {2018},
  editor = 	 {Dy, Jennifer and Krause, Andreas},
  volume = 	 {80},
  series = 	 {Proceedings of Machine Learning Research},
  month = 	 {10--15 Jul},
  publisher =    {PMLR},
  url = 	 {https://proceedings.mlr.press/v80/agarwal18a.html}
}

@book{barocas-hardt-narayanan,
  title = {Fairness and Machine Learning: Limitations and Opportunities},
  author = {Solon Barocas and Moritz Hardt and Arvind Narayanan},
  publisher = {MIT Press},
  year = {2023}
}

@inproceedings{10.1145/3287560.3287586,
author = {Celis, L. Elisa and Huang, Lingxiao and Keswani, Vijay and Vishnoi, Nisheeth K.},
title = {Classification with Fairness Constraints: A Meta-Algorithm with Provable Guarantees},
year = {2019},
isbn = {9781450361255},
publisher = {Association for Computing Machinery},
address = {New York, NY, USA},
url = {https://doi.org/10.1145/3287560.3287586},
doi = {10.1145/3287560.3287586},
booktitle = {Proceedings of the Conference on Fairness, Accountability, and Transparency},
pages = {319–328},
numpages = {10},
location = {Atlanta, GA, USA},
series = {FAT* '19}
}

@article{Chiappa_2019, 
title={Path-Specific Counterfactual Fairness}, volume={33}, url={https://ojs.aaai.org/index.php/AAAI/article/view/4777}, DOI={10.1609/aaai.v33i01.33017801}, abstractNote={&lt;p&gt;We consider the problem of learning fair decision systems from data in which a sensitive attribute might affect the decision along both fair and unfair pathways. We introduce a counterfactual approach to disregard effects along unfair pathways that does not incur in the same loss of individual-specific information as previous approaches. Our method corrects observations adversely affected by the sensitive attribute, and uses these to form a decision. We leverage recent developments in deep learning and approximate inference to develop a VAE-type method that is widely applicable to complex nonlinear models.&lt;/p&gt;}, number={01}, journal={Proceedings of the AAAI Conference on Artificial Intelligence}, author={Chiappa, Silvia}, year={2019}, month={Jul.}, pages={7801-7808} }

@article{chouldechova2017fairprediction,
  author  = {Chouldechova, Alexandra},
  title   = {Fair Prediction with Disparate Impact: A Study of Bias in Recidivism Prediction Instruments},
  journal = {Big Data},
  year    = {2017},
  volume  = {5},
  number  = {2},
  pages   = {153--163},
  doi     = {10.1089/big.2016.0047},
  publisher = {Mary Ann Liebert, Inc.}
}

@inproceedings{NEURIPS2018_83cdcec0,
 author = {Donini, Michele and Oneto, Luca and Ben-David, Shai and Shawe-Taylor, John S and Pontil, Massimiliano},
 booktitle = {Advances in Neural Information Processing Systems},
 editor = {S. Bengio and H. Wallach and H. Larochelle and K. Grauman and N. Cesa-Bianchi and R. Garnett},
 pages = {},
 publisher = {Curran Associates, Inc.},
 title = {Empirical Risk Minimization Under Fairness Constraints},
 url = {https://proceedings.neurips.cc/paper_files/paper/2018/file/83cdcec08fbf90370fcf53bdd56604ff-Paper.pdf},
 volume = {31},
 year = {2018}
}

@inproceedings{10.1145/2090236.2090255,
author = {Dwork, Cynthia and Hardt, Moritz and Pitassi, Toniann and Reingold, Omer and Zemel, Richard},
title = {Fairness through awareness},
year = {2012},
isbn = {9781450311151},
publisher = {Association for Computing Machinery},
address = {New York, NY, USA},
url = {https://doi.org/10.1145/2090236.2090255},
doi = {10.1145/2090236.2090255},
booktitle = {Proceedings of the 3rd Innovations in Theoretical Computer Science Conference},
pages = {214–226},
numpages = {13},
location = {Cambridge, Massachusetts},
series = {ITCS '12}
}

@inproceedings{fontana2026optimizingfairness,
  author    = {Fontana, Marco and Naretto, Francesco and Monreale, Anna},
  title     = {Optimizing and Tuning Fairness in Machine Learning: An Augmented Lagrangian Method with a Performance Budget},
  booktitle = {Machine Learning and Knowledge Discovery in Databases: Research Track (ECML PKDD 2025)},
  series    = {Lecture Notes in Computer Science},
  volume    = {16013},
  year      = {2026},
  publisher = {Springer},
  address   = {Cham},
  doi       = {10.1007/978-3-032-05962-8_13}
}

@Article{jrfm18120724,
AUTHOR = {Kpatcha, Essodjolo},
TITLE = {Balancing Fairness and Accuracy in Machine Learning-Based Probability of Default Modeling via Threshold Optimization},
JOURNAL = {Journal of Risk and Financial Management},
VOLUME = {18},
YEAR = {2025},
NUMBER = {12},
ARTICLE-NUMBER = {724},
URL = {https://www.mdpi.com/1911-8074/18/12/724},
ISSN = {1911-8074},
DOI = {10.3390/jrfm18120724}
}

@inproceedings{NIPS2017_a486cd07,
 author = {Kusner, Matt J and Loftus, Joshua and Russell, Chris and Silva, Ricardo},
 booktitle = {Advances in Neural Information Processing Systems},
 editor = {I. Guyon and U. Von Luxburg and S. Bengio and H. Wallach and R. Fergus and S. Vishwanathan and R. Garnett},
 pages = {},
 publisher = {Curran Associates, Inc.},
 title = {Counterfactual Fairness},
 url = {https://proceedings.neurips.cc/paper_files/paper/2017/file/a486cd07e4ac3d270571622f4f316ec5-Paper.pdf},
 volume = {30},
 year = {2017}
}

@article{10.1145/3533380,
author = {Zehlike, Meike and Yang, Ke and Stoyanovich, Julia},
title = {Fairness in Ranking, Part II: Learning-to-Rank and Recommender Systems},
year = {2022},
issue_date = {June 2023},
publisher = {Association for Computing Machinery},
address = {New York, NY, USA},
volume = {55},
number = {6},
issn = {0360-0300},
url = {https://doi.org/10.1145/3533380},
doi = {10.1145/3533380},
journal = {ACM Comput. Surv.},
month = dec,
articleno = {117},
numpages = {41}
}

@article{10.1145/3457607,
author = {Mehrabi, Ninareh and Morstatter, Fred and Saxena, Nripsuta and Lerman, Kristina and Galstyan, Aram},
title = {A Survey on Bias and Fairness in Machine Learning},
year = {2021},
issue_date = {July 2022},
publisher = {Association for Computing Machinery},
address = {New York, NY, USA},
volume = {54},
number = {6},
issn = {0360-0300},
url = {https://doi.org/10.1145/3457607},
doi = {10.1145/3457607},
journal = {ACM Comput. Surv.},
month = jul,
articleno = {115},
numpages = {35}
}

@inproceedings{narasimhan2020pairwisefairness,
  author    = {Narasimhan, Harikrishna and Cotter, Andrew and Gupta, Maya and Wang, Serena},
  title     = {Pairwise Fairness for Ranking and Regression},
  booktitle = {Proceedings of the AAAI Conference on Artificial Intelligence},
  year      = {2020},
  volume    = {34},
  number    = {04},
  pages     = {5248--5255},
  doi       = {10.1609/aaai.v34i04.5970},
  publisher = {AAAI Press}
}

@inproceedings{kleinberg_inherent_2017,
	location = {Dagstuhl, Germany},
	title = {Inherent Trade-Offs in the Fair Determination of Risk Scores},
	volume = {67},
	isbn = {978-3-95977-029-3},
	url = {http://drops.dagstuhl.de/opus/volltexte/2017/8156},
	doi = {10.4230/LIPIcs.ITCS.2017.43},
	series = {Leibniz International Proceedings in Informatics ({LIPIcs})},
	pages = {43:1--43:23},
	booktitle = {8th Innovations in Theoretical Computer Science Conference ({ITCS} 2017)},
	publisher = {Schloss Dagstuhl–Leibniz-Zentrum fuer Informatik},
	author = {Kleinberg, Jon and Mullainathan, Sendhil and Raghavan, Manish},
	editor = {Papadimitriou, Christos H.},
	urldate = {2019-07-16},
	date = {2017}
}

@article{10.5555/3648699.3649011,
author = {Corbett-Davies, Sam and Gaebler, Johann D. and Nilforoshan, Hamed and Shroff, Ravi and Goel, Sharad},
title = {The measure and mismeasure of fairness},
year = {2023},
issue_date = {January 2023},
publisher = {JMLR.org},
volume = {24},
number = {1},
issn = {1532-4435},
journal = {J. Mach. Learn. Res.},
month = jan,
articleno = {312},
numpages = {117}
}

@incollection{10.1108/978-1-83608-156-220251007,
    author = {Khatun, Sofia and K., Sivananda Kumar},
    isbn = {978-1-83608-157-9},
    title = {Strategising Algorithm: The Prospects and Perils of Artificial Intelligence (AI) in Criminal Justice Reformation},
    booktitle = {Security Intelligence in the Age of AI: Navigating Legal and Ethical Frameworks},
    publisher = {Emerald Publishing Limited},
    year = {2025},
    month = {07},
    doi = {10.1108/978-1-83608-156-220251007},
    url = {https://doi.org/10.1108/978-1-83608-156-220251007},
    eprint = {https://www.emerald.com/book/chapter-pdf/10046417/978-1-83608-156-220251007.pdf},
}

@article{10.1108/AAAJ-02-2022-5666,
    author = {Nikidehaghani, Mona and Andrew, Jane and Cortese, Corinne},
    title = {Algorithmic accountability: robodebt and the making of welfare cheats},
    journal = {Accounting, Auditing \& Accountability Journal},
    volume = {36},
    number = {2},
    pages = {677-711},
    year = {2022},
    month = {08},
    issn = {1368-0668},
    doi = {10.1108/AAAJ-02-2022-5666},
    url = {https://doi.org/10.1108/AAAJ-02-2022-5666},
    eprint = {https://www.emerald.com/aaaj/article-pdf/36/2/677/13803/aaaj-02-2022-5666.pdf},
}

@article{Podoletz2023EmotionalAI,
  author  = {Podoletz, Leonie},
  title   = {We have to talk about emotional AI and crime},
  journal = {AI \& Society},
  year    = {2023},
  volume  = {38},
  pages   = {1067--1082},
  doi     = {10.1007/s00146-022-01435-w},
  publisher = {Springer}
}

@article{Coots2025RacialBias,
  author  = {Coots, Matthew and Linn, K. Andrew and Goel, Sharad and Navathe, Amol S. and Parikh, Ravi B.},
  title   = {Racial Bias in Clinical and Population Health Algorithms: A Critical Review of Current Debates},
  journal = {Annual Review of Public Health},
  year    = {2025},
  volume  = {46},
  number  = {1},
  pages   = {507--523},
  doi     = {10.1146/annurev-publhealth-071823-112058},
  publisher = {Annual Reviews}
}

@misc{teodorescu2025analysisneweuai,
      title={An Analysis of the New EU AI Act and A Proposed Standardization Framework for Machine Learning Fairness}, 
      author={Mike Teodorescu and Yongxu Sun and Haren N. Bhatia and Christos Makridis},
      year={2025},
      eprint={2510.01281},
      archivePrefix={arXiv},
      primaryClass={cs.CY},
      url={https://arxiv.org/abs/2510.01281}, 
}

@inproceedings{10.1145/3679240.3734615,
author = {Li, Pengfei and Christianson, Nicolas and Yang, Jianyi and Wierman, Adam and Ren, Shaolei},
title = {Learning for Sustainable Online Scheduling with Competitive Fairness Guarantees},
year = {2025},
isbn = {9798400711251},
publisher = {Association for Computing Machinery},
address = {New York, NY, USA},
url = {https://doi.org/10.1145/3679240.3734615},
doi = {10.1145/3679240.3734615},
booktitle = {Proceedings of the 16th ACM International Conference on Future and Sustainable Energy Systems},
pages = {63–79},
numpages = {17},
location = {
},
series = {E-Energy '25}
}

@misc{huang2025adapfairensuringadaptivefairness,
      title={AdapFair: Ensuring Adaptive Fairness for Machine Learning Operations}, 
      author={Yinghui Huang and Zihao Tang and Xiangyu Chang},
      year={2025},
      eprint={2409.15088},
      archivePrefix={arXiv},
      primaryClass={cs.LG},
      url={https://arxiv.org/abs/2409.15088}, 
}

@inproceedings{10.1007/978-3-031-33377-4_18,
author = {Ganesh, Aadityan and Ghosal, Pratik and HV, Vishwa Prakash and Nimbhorkar, Prajakta},
title = {Fair Healthcare Rationing to\&nbsp;Maximize Dynamic Utilities},
year = {2023},
isbn = {978-3-031-33376-7},
publisher = {Springer-Verlag},
address = {Berlin, Heidelberg},
url = {https://doi.org/10.1007/978-3-031-33377-4_18},
doi = {10.1007/978-3-031-33377-4_18},
booktitle = {Advances in Knowledge Discovery and Data Mining: 27th Pacific-Asia Conference on Knowledge Discovery and Data Mining, PAKDD 2023, Osaka, Japan, May 25–28, 2023, Proceedings, Part II},
pages = {231–242},
numpages = {12},
location = {Osaka, Japan}
}

@misc{xiang2019legalcompatibilityfairnessdefinitions,
      title={On the Legal Compatibility of Fairness Definitions}, 
      author={Alice Xiang and Inioluwa Deborah Raji},
      year={2019},
      eprint={1912.00761},
      archivePrefix={arXiv},
      primaryClass={cs.CY},
      url={https://arxiv.org/abs/1912.00761}, 
}

@inproceedings{NEURIPS2018_8e038477,
 author = {Lipton, Zachary and McAuley, Julian and Chouldechova, Alexandra},
 booktitle = {Advances in Neural Information Processing Systems},
 editor = {S. Bengio and H. Wallach and H. Larochelle and K. Grauman and N. Cesa-Bianchi and R. Garnett},
 pages = {},
 publisher = {Curran Associates, Inc.},
 title = {Does mitigating ML\textquotesingle s impact disparity require treatment disparity?},
 url = {https://proceedings.neurips.cc/paper_files/paper/2018/file/8e0384779e58ce2af40eb365b318cc32-Paper.pdf},
 volume = {31},
 year = {2018}
}

@article{hellman2020measuring,
  title={Measuring algorithmic fairness},
  author={Hellman, Deborah},
  journal={Virginia Law Review},
  volume={106},
  number={4},
  pages={811--866},
  year={2020},
  url={https://www.virginialawreview.org/articles/measuring-algorithmic-fairness/}
}

@misc{civilrights1991,
  title     = {Civil Rights Act of 1991, Pub. L. No. 102-166, S 106, 105 Stat. 1071 (codified as amended at 42 U.S.C. S 2000e-2(l))},
  author    = {{U.S. Congress}},
  year      = {1991},
  note      = {Prohibits the discriminatory adjustment of test scores or use of different cutoff scores based on protected characteristics (commonly known as the prohibition on ``race norming'').},
  url       = {https://www.law.cornell.edu/uscode/text/42/2000e-2},
  howpublished = {United States Code}
}

@misc{ecoa1974,
  title     = {Equal Credit Opportunity Act, Pub. L. No. 93-495, Title V, 88 Stat. 1521 (codified as amended at 15 U.S.C. S 1691 et seq.)},
  author    = {{U.S. Congress}},
  year      = {1974},
  note      = {Prohibits creditors from discriminating on the basis of race, color, religion, national origin, sex, marital status, age, or receipt of public assistance in any aspect of credit transactions.},
  url       = {https://www.law.cornell.edu/uscode/text/15/1691},
  howpublished = {United States Code}
}

@inproceedings{ding2021retiring,
  title     = {Retiring Adult: New Datasets for Fair Machine Learning},
  author    = {Ding, Frances and Hardt, Moritz and Miller, John and Schmidt, Ludwig},
  booktitle = {Advances in Neural Information Processing Systems},
  volume    = {34},
  year      = {2021}
}

@misc{censusACS,
  author       = {{U.S. Census Bureau}},
  title        = {American Community Survey Public Use Microdata Sample (ACS PUMS)},
  year         = {2020},
  note         = {Accessed via Folktables benchmark construction},
  url          = {https://www.census.gov/programs-surveys/acs/microdata.html}
}

@misc{propublicaCOMPAS,
  author       = {Angwin, Julia and Larson, Jeff and Mattu, Surya and Kirchner, Lauren},
  title        = {COMPAS Recidivism Risk Score Data and Analysis},
  year         = {2016},
  howpublished = {ProPublica Investigative Journalism Dataset},
  url          = {https://github.com/propublica/compas-analysis}
}

@misc{larson2016compasanalysis,
  author       = {Larson, Jeff and Mattu, Surya and Kirchner, Lauren and Angwin, Julia},
  title        = {How We Analyzed the COMPAS Recidivism Algorithm},
  year         = {2016},
  url          = {https://www.propublica.org/article/how-we-analyzed-the-compas-recidivism-algorithm}
}

@inproceedings{smith1988diabetes,
  title     = {Using the ADAP Learning Algorithm to Forecast the Onset of Diabetes Mellitus},
  author    = {Smith, John W. and Everhart, James E. and Dickson, William C. and Knowler, William C. and Johannes, Robert S.},
  booktitle = {Proceedings of the Annual Symposium on Computer Application in Medical Care},
  pages     = {261--265},
  year      = {1988}
}

@misc{uciRepository,
  author       = {Dua, Dheeru and Graff, Casey},
  title        = {UCI Machine Learning Repository},
  year         = {2019},
  url          = {https://archive.ics.uci.edu}
}

@book{hosmer2013applied,
  title     = {Applied Logistic Regression},
  author    = {Hosmer, David W. and Lemeshow, Stanley and Sturdivant, Rodney X.},
  edition   = {3},
  year      = {2013},
  publisher = {Wiley}
}

@article{breiman2001randomforest,
  title   = {Random Forests},
  author  = {Breiman, Leo},
  journal = {Machine Learning},
  volume  = {45},
  number  = {1},
  pages   = {5--32},
  year    = {2001},
  doi     = {10.1023/A:1010933404324}
}

@article{friedman2001greedy,
  title   = {Greedy Function Approximation: A Gradient Boosting Machine},
  author  = {Friedman, Jerome H.},
  journal = {Annals of Statistics},
  volume  = {29},
  number  = {5},
  pages   = {1189--1232},
  year    = {2001}
}

\newpage
\appendix
\section{Proofs of Theoretical Properties}
\label{app:proofs}

The following proofs rely on the structural properties of the risk-score distribution and the resulting error rates. Based on the definitions of the evaluation metrics, we establish three foundational facts: (i) $\mathrm{FNR}(\tau)$ is a non-decreasing function of $\tau$; (ii) $\mathrm{FPR}(\tau)$ is a non-increasing function of $\tau$; and (iii) the capacity-bound threshold $\tau(C) = F_p^{-1}(1-C)$ is strictly decreasing in $C$. 

In the following proofs, we utilize standard comparative statics for unconstrained scalar optimization: if an objective function $\mathcal{L}(\tau)$ is modified by adding a penalty term that is non-decreasing in $\tau$, the new global minimizer cannot be strictly greater than the original one.

\begin{proof}[Proof of Monotonicity in Safety]
Let $\alpha_1 > \alpha_2$. The difference in the unconstrained objective functions is $\mathcal{L}_1(\tau) - \mathcal{L}_2(\tau) = (\alpha_1 - \alpha_2)\mathrm{FNR}(\tau)$. Because $\alpha_1 - \alpha_2 > 0$ and $\mathrm{FNR}(\tau)$ is non-decreasing, the penalty applied to higher thresholds is strictly larger under $\mathcal{L}_1$. Consequently, the optimal unconstrained threshold cannot increase, yielding $\tau_{\text{free}}(\alpha_1) \leq \tau_{\text{free}}(\alpha_2)$. Because the max operator $\tau^* = \max(\tau(C), \tau_{\text{free}})$ is monotonically non-decreasing in its second argument, it follows that $\tau^*(\alpha_1) \leq \tau^*(\alpha_2)$.
\end{proof}

\begin{proof}[Proof of Monotonicity in Efficiency]
Let $\beta_1 > \beta_2$. The difference in objectives is $\mathcal{L}_1(\tau) - \mathcal{L}_2(\tau) = (\beta_1 - \beta_2)\mathrm{FPR}(\tau)$. Because $\mathrm{FPR}(\tau)$ is non-increasing, the difference term is non-increasing in $\tau$. Adding a non-increasing term to the minimization objective disproportionately penalizes lower thresholds, thereby ensuring that the new minimizer cannot decrease. Thus, $\tau_{\text{free}}(\beta_1) \geq \tau_{\text{free}}(\beta_2)$, which via the max operator ensures $\tau^*(\beta_1) \geq \tau^*(\beta_2)$.
\end{proof}

\begin{proof}[Proof of Monotonicity in Equity Weight $\gamma$ (Local)]
Assume that we restrict our analysis to a local interval where the disparity function $\Delta(\tau)$ is non-decreasing. For $\gamma_1 > \gamma_2$, the difference term $(\gamma_1 - \gamma_2)\Delta(\tau)$ is a non-decreasing function of $\tau$. By the same comparative statics applied in Property 1, this ensures that the unconstrained minimizer cannot increase, yielding $\tau_{\text{free}}(\gamma_1) \leq \tau_{\text{free}}(\gamma_2)$. 

Furthermore, if the capacity constraint is strictly binding such that $\tau(C) > \tau_{\text{free}}(\gamma_2)$, then by definition $\tau(C) > \tau_{\text{free}}(\gamma_1)$ as well. Under these conditions, $\tau^*(\gamma_1) = \max(\tau(C), \tau_{\text{free}}(\gamma_1)) = \tau(C)$. Thus, when the capacity constraint binds, the deployed threshold $\tau^*$ becomes invariant to $\gamma$.
\end{proof}

\begin{proof}[Proof of Monotonicity in Capacity]
Fix the ethical weights $(\alpha, \beta, \gamma)$, which fix $\tau_{\text{free}}$. The deployed threshold is given by $\tau^*(C) = \max(\tau(C), \tau_{\text{free}})$. Let $C_1 > C_2$. Because $\tau(C)$ is the $(1-C)$-quantile of the risk scores, a larger capacity $C_1$ allows for a lower threshold, that is, $\tau(C_1) \leq \tau(C_2)$. Since the maximum of a non-increasing function and a constant is itself non-increasing, it follows that $\tau^*(C_1) \leq \tau^*(C_2)$.
\end{proof}

\begin{proof}[Proof of Asymptotic Behavior]
As capacity approaches the total population size ($C \to 1^-$), $\tau(C) \to F_p^{-1}(0)$, which is the minimum predicted risk score. Eventually, $\tau(C) \leq \tau_{\text{free}}$, at which point the max operator returns $\tau_{\text{free}}$. Conversely, as the capacity approaches zero ($C \to 0^+$), $\tau(C) \to F_p^{-1}(1) = 1$. Since $\tau_{\text{free}} \leq 1$ by definition, eventually $\tau(C) > \tau_{\text{free}}$, causing the max operator to return $\tau(C) \to 1$.
\end{proof}

\begin{proof}[Proof of Critical Capacity]
Let $C^*$ be the exact proportion of the population that receives a risk score greater than or equal to the unconstrained optimal threshold: $C^* = \mathbb{P}(p(x) \geq \tau_{\text{free}})$. By the definition of the quantile function, this implies $\tau(C^*) = \tau_{\text{free}}$.
\begin{itemize}
    \item If $C \geq C^*$, the strict monotonicity of $\tau(C)$ ensures $\tau(C) \leq \tau(C^*) = \tau_{\text{free}}$. The resource constraint is loose, yielding $\tau^* = \max(\tau(C), \tau_{\text{free}}) = \tau_{\text{free}}$.
    \item If $C < C^*$, then $\tau(C) > \tau(C^*) = \tau_{\text{free}}$. The resource constraint is strictly binding, yielding $\tau^* = \max(\tau(C), \tau_{\text{free}}) = \tau(C)$.
\end{itemize}
This confirms the piecewise structure of the deployed decision rules.
\end{proof}

\end{document}